\newcommand{\neurips}[1]{\iftoggle{neurips}{#1}{}}
\newcommand{\arxiv}[1]{\iftoggle{neurips}{}{#1}}
\date{}
\colorlet{txblue}{RoyalBlue!70!NavyBlue}
\newcommand{\neutralize}[1]{\expandafter\let\csname c@#1\endcsname\count@}
\newtheorem*{lemma*}{Lemma}
\declaretheorem[name=Theorem,parent=section]{theorem}
\declaretheorem[name=Lemma,parent=section]{lemma}
\declaretheorem[name=Assumption, parent=section]{assumption}
\declaretheorem[name=Condition, parent=section]{condition}
\declaretheorem[name=Remark, parent=section]{remark}
\declaretheorem[name=Proposition, parent=section]{proposition}
    \let\Cref\crtCref
    \let\cref\crtcref
  \renewenvironment{proof}[1][Proof]%
  {%
   \par\noindent{\bfseries\upshape {#1.}\ }%
  }%
  {\qed\newline}
\theoremstyle{definition}  %
\newtheorem{corollary}{Corollary}[section]
\theoremstyle{plain}
\newtheorem{definition}{Definition}[section]
\xpatchcmd{\proof}{\itshape}{\normalfont\proofnameformat}{}{}
\newcommand{\proofnameformat}{\bfseries}
\renewcommand{\eqref}[1]{\texorpdfstring{\hyperref[#1]{(\ref*{#1})}}{(\ref*{#1})}}
\Crefname{assumption}{Assumption}{Assumptions}
\crefname{fact}{Fact}{Facts}
    \let\Cref\crtCref
    \let\cref\crtcref
\DeclareDocumentCommand{\XDeclarePairedDelimiter}{mm}
 {
  \__egreg_delimiter_clear_keys: %
  \keys_set:nn { egreg/delimiters } { #2 }
  \use:x %
   {
    \exp_not:n {\NewDocumentCommand{#1}{sO{}m} }
     {
      \exp_not:n { \IfBooleanTF{##1} }
       {
        \exp_not:N \egreg_paired_delimiter_expand:nnnn
         { \exp_not:V \l_egreg_delimiter_left_tl }
         { \exp_not:V \l_egreg_delimiter_right_tl }
         { \exp_not:n { ##3 } }
         { \exp_not:V \l_egreg_delimiter_subscript_tl }
       }
       {
        \exp_not:N \egreg_paired_delimiter_fixed:nnnnn 
         { \exp_not:n { ##2 } }
         { \exp_not:V \l_egreg_delimiter_left_tl }
         { \exp_not:V \l_egreg_delimiter_right_tl }
         { \exp_not:n { ##3 } }
         { \exp_not:V \l_egreg_delimiter_subscript_tl }
       }
     }
   }
 }
\XDeclarePairedDelimiter{\supnorm}{
  left=\lVert,
  right=\rVert,
  subscript=\infty
  }
\let\Pr\undefined
\DeclareMathOperator{\Pr}{Pr}
\def\ddefloop#1{\ifx\ddefloop#1\else\ddef{#1}\expandafter\ddefloop\fi}
\def\ddef#1{\expandafter\def\csname bb#1\endcsname{\ensuremath{\mathbb{#1}}}}
\def\ddefloop#1{\ifx\ddefloop#1\else\ddef{#1}\expandafter\ddefloop\fi}
\def\ddef#1{\expandafter\def\csname b#1\endcsname{\ensuremath{\mathbf{#1}}}}
\def\ddef#1{\expandafter\def\csname sf#1\endcsname{\ensuremath{\mathsf{#1}}}}
\def\ddef#1{\expandafter\def\csname c#1\endcsname{\ensuremath{\mathcal{#1}}}}
\def\ddef#1{\expandafter\def\csname h#1\endcsname{\ensuremath{\widehat{#1}}}}
\def\ddef#1{\expandafter\def\csname hc#1\endcsname{\ensuremath{\widehat{\mathcal{#1}}}}}
\def\ddef#1{\expandafter\def\csname t#1\endcsname{\ensuremath{\widetilde{#1}}}}
\def\ddef#1{\expandafter\def\csname tc#1\endcsname{\ensuremath{\widetilde{\mathcal{#1}}}}}
\def\ddefloop#1{\ifx\ddefloop#1\else\ddef{#1}\expandafter\ddefloop\fi}
\def\ddef#1{\expandafter\def\csname scr#1\endcsname{\ensuremath{\mathscr{#1}}}}
\DeclareMathOperator*{\argmax}{arg\,max}
\title{Revisiting Social Welfare in Bandits: UCB is (Nearly) All You Need}
\author{
  Dhruv Sarkar\thanks{Indian Institute of Technology Kharagpur, \texttt{dhruv.sarkar@kgpian.iitkgp.ac.in}} 
  \and
  Nishant Pandey\thanks{Indian Institute of Technology Kanpur, \texttt{nishantp22@iitk.ac.in}} 
  \and
  Sayak Ray Chowdhury\thanks{Indian Institute of Technology Kanpur, \texttt{sayakrc@iitk.ac.in}} 
}
\begin{document}

\maketitle

\begin{abstract}

Regret in stochastic multi-armed bandits traditionally measures the difference between the highest reward and either the arithmetic mean of accumulated rewards or the final reward. These conventional metrics often fail to address fairness among agents receiving rewards, particularly in settings where rewards are distributed across a population, such as patients in clinical trials. To address this, a recent body of work has introduced Nash regret, which evaluates performance via the geometric mean of accumulated rewards, aligning with the Nash social welfare function known for satisfying fairness axioms. 

To minimize Nash regret, existing approaches
require specialized algorithm designs and strong assumptions, such as multiplicative concentration inequalities and bounded, non-negative rewards, making them unsuitable for even Gaussian reward distributions. We demonstrate that an initial uniform exploration phase followed by a standard Upper Confidence Bound (UCB) algorithm achieves near-optimal Nash regret, while relying only on additive Hoeffding bounds, and naturally extending to sub-Gaussian rewards. Furthermore, we generalize the algorithm to a broad class of fairness metrics called 
the $p$-mean regret, proving (nearly) optimal regret bounds uniformly across all $p$ values.
This is in contrast to prior work, which made extremely restrictive assumptions on the bandit instances and even then achieved suboptimal regret bounds. Numerical simulations validate our method’s practical efficacy, broadening the accessibility of fairness in bandit algorithms.Our experiments can be reproduced using the following code: \url{https://github.com/NP-Hardest/UCBisAllYouNeed}.
\end{abstract}

\section{Introduction}

The multi-armed bandit (MAB) problem is a foundational framework for sequential decision-making under uncertainty, with applications that span healthcare, advertising, education, and beyond.
In this setup, a decision-maker sequentially selects from a set of arms $[k]:=\{1,2,...k\}$ -- each having reward distribution $\rho_i$ with unknown mean $\mu_i \in \mathbb{R}$ -- across a time horizon $T$, aiming to minimize regret, which quantifies the loss incurred by not always choosing the arm with the highest mean reward $\mu^* = \max_{i \in [k]} \mu_i$. Traditionally, regret is measured as the difference between $\mu^*$ and the arithmetic mean of accumulated rewards $\frac{1}{T}\sum_{t=1}^T \mathbb{E}[\mu_{I_t}] $, where $I_t$ is the arm selected at $t$-th round depending on past (random) observations. This definition of (average) regret often overlooks fairness considerations, particularly in settings where rewards correspond to values accruing to a population of users, such as patients in clinical trials. While average regret equates to maximizing social welfare, as defined by the average reward, it may still permit significant disparities across different users.

To address these fairness limitations, \cite{barman2023fairness} introduce Nash regret—a strengthened notion based on the Nash Social Welfare (NSW) function -- defined using the geometric mean of expected rewards $\left(\prod_{t=1}^T \mathbb{E}[\mu_{I_t}] \right)^\frac{1}{T}$ -- a function well-known in economics for satisfying key fairness axioms \citep{moulin2004fair}. By minimizing Nash regret, bandit algorithms promote equitable outcomes that balance efficiency and fairness, ensuring that users across all rounds benefit fairly from the decision-making process while still enabling exploration to identify optimal arms.

\begin{table*}[t]
        \centering
        \begin{tabular}{ccccc}
            \toprule
            Reference &   $p \in [0, 1] $ & $ p \in [-1, 0)$  &  $p < -1 $ & Assumptions\\
            \midrule
            \citet{barman2023fairness}
            
           & $\widetilde{O}\left(\sqrt{\frac{k}{T}}\right)$ & - & - & Rewards bounded in $[0,1]$ \\
            \midrule
            \citet{krishna2025p}
            
           & \centering $\widetilde{O}\left(\sqrt{\frac{k}{T}}\right)$ & $\widetilde{O}\left(k^{\frac{3}{4}}{T}^{\frac{-1}{4}}\right)$ &  $\widetilde{O}\left(k^{\frac{1}{2}}{T}^{\frac{-1}{4|p|}}\right)$ & \thead{Rewards bounded in $[0,1]$ \\ $\mu_i\geq \frac{32\sqrt{k\log T\sqrt{\log k}}}{T^{1/4}} \ \forall \ i\in [k]$}\\
            \midrule
            Ours
           & \centering $\widetilde{O}\left(\sqrt{\frac{k}{T}}\right)$ & $\widetilde{O}\left(\frac{k^\frac{|p|+1}{2}}{\sqrt T}\right)$ & $\widetilde{O}\left(\frac{|p| k^\frac{|p|+1}{2}}{\sqrt T}\right)$ & \thead{Sub-Gaussian rewards\\ $\mu_i \geq 0 \ \forall \ i\in [k]$}\\
            \bottomrule
        \end{tabular}
        \caption{Summary of regret bounds for different $p \in (-\infty,1]$. Our results not only do away with restrictive assumptions of prior work but also significantly improve the regret bounds to achieve order-optimality.\label{tab:results}}
    \end{table*}
    
While Upper Confidence Bound (UCB, \cite{bubeck2012regret}) is a classical index-based strategy to minimize average regret, \cite{barman2023fairness} argues it doesn't suffice to minimize Nash regret. They propose a new index, Nash Confidence Bound (NCB) $\widehat{\mu}_{i}+\sqrt{\frac{2\hat{\mu}_{i}\log T}{n_{i}}}$, where the confidence width depends on $\widehat \mu_i$--an unbiased estimate of $\mu_i$ from $n_i$ independent samples (e.g. sample mean). To show that NCBs are optimistic estimates of unknown true means, they resort to multiplicative Hoeffding/Chernoff bounds, which put a rather restrictive assumption on reward distributions--each $p_i$ to have support on a non-negative interval in $\mathbb{R}$. It also implicitly puts a cap on the value of $\mu^*$ and requires their algorithm to have knowledge of that. Instead, it is desirable to have algorithms that work under generic reward distributions (e.g., Gaussian) and don't require any upper bound on $\mu^*$.

\citet{krishna2025p} study a more general class of fairness metric, the $p$-mean welfare -- defined using the generalized power mean of expected rewards $\left(\frac{1}{T}\sum_{t=1}^T (\mathbb{E}[\mu_{I_t}])^p\right)^\frac{1}{p}$ -- a function with roots in social choice theory \citep{moulin2004fair}. By changing the value of the parameter $p \in (-\infty,1]$, this single function can be made to behave like a utilitarian function (focused on maximizing total utility), a Rawlsian function (focused on the worst-off individual), or an intermediate function like Nash social welfare. This generalized metric thus helps us study all the social objectives under a single umbrella, based on the value of $p$.

To minimize $p$-mean regret (which includes Nash regret as a special case), \citet{krishna2025p} employs the UCB index. Their results depend on a restrictive assumption that every arm has an expected reward of at least the order $\sqrt{k}/T^{\frac{1}{4}}$. This is in stark contrast to a counter-example presented in~\citet{barman2023fairness} that demonstrates the failure of UCB in minimizing Nash regret. It considers two Bernoulli arms with means \(\mu_1 = (2e)^{-T}\) and \(\mu_2 = 1\). The assumption of the minimum expected reward excludes this example, significantly limiting the applicability of their algorithm to bandit instances like this. Moreover, they assume bounded and non-negative rewards, as in \citet{barman2023fairness}, further narrowing the scope.

We address these shortcomings in prior work by introducing a systematic framework to minimize both Nash and $p$-mean regrets. Our study demonstrates that the UCB index, combined with a data-adaptive initial exploration step, is sufficient to achieve non-trivial, nearly optimal regret bounds without any restrictive assumptions on the reward distributions and their (unknown) means. More specifically, we make the following contributions:

\begin{enumerate}
\item We introduce a reduction framework that enables us to minimize Nash regret via a short adaptive uniform exploration phase, followed by the execution of a standard bandit algorithm, such as UCB. Our data-adaptive stopping rule for exploration is the key to facilitating this reduction, thereby demonstrating the versatility of the UCB index in minimizing Nash regret. Moreover, the reduction seamlessly adapts to all possible values of the fairness parameter $p$ (for Nash regret, $p=0$). We utilize this insight to design Welfarist-UCB, a novel bandit algorithm that minimizes both Nash and $p$-mean regrets.

\item To bound Nash/$p$-mean regret of Welfarist-UCB, we work with the (additive) Hoeffding inequality instead of the multiplicative one (which is used in~\citet{barman2023fairness}).
This helps us sidestep restrictive (e.g., bounded, non-negative) assumptions on the rewards that multiplicative bounds often require. Notably, multiplicative bounds tend to be inapplicable or significantly looser in broader settings, such as sub-Gaussian distributions. By relying on additive Hoeffding bounds, our algorithm naturally works under sub-Gaussian rewards and also doesn't require any upper bound on the optimal reward $\mu^*$.

\item (a) We prove that Welfarist-UCB attains $\widetilde O\Big(\sigma \sqrt{\frac{k}{T}} \Big)$ upper bound on Nash regret for $\sigma$-sub-Gaussian rewards. This bound is order-optimal and includes the result of \citet{barman2023fairness} for bounded, non-negative rewards as a special case. 

(b) For $p \in [0,1]$, we obtain an order-optimal $p$-mean regret of $\widetilde O\Big(\sigma \sqrt{\frac{k}{T}} \Big)$ for Welfarist-UCB, which not only includes the bound of \citet{krishna2025p} for bounded, non-negative rewards as a sub-case but also gets rid of their unrealistic assumption of each $\mu_i \ge \widetilde \Omega\big( \sqrt{k}T^{-1/4}\big)$.

(c) For any $p < 0$, we prove that $p$-mean regret of Welfarist-UCB is $\widetilde{O}\Big(\frac{\sigma k^\frac{|p|+1}{2}}{\sqrt T}\cdot\max\{1,|p|\}\Big)$. When $p \ge -1$, the regret can be further bounded by $\widetilde O \big(k/\sqrt{T}\big)$, which is tighter than the $\widetilde{O}\big(k^{3/4}T^{-\frac{1}{4}}\big)$ bound of~\citet{krishna2025p} since $T > k$. 

(d) As $p$ becomes more negative (e.g., $p < -1$), our regret bound grows exponentially w.r.t. $|p|$ due to stricter fairness requirements while keeping the asymptotic scale w.r.t. time horizon $T$ same. In the extreme case when $p \to -\infty$ (i.e., when the regret is Rawlsian), the bound becomes vacuous unless $T > O\big(p^2 k^{|p|}\big)$, essentially highlighting a ``no-free-lunch" principle in $p$-mean regret minimization. \citet{krishna2025p} avoids the exponential scaling w.r.t. $p$ 
at the expense of a worse $T^{-\frac{1}{4|p|}}$
scaling w.r.t. $T$ and resorting to a restrictive assumption of $\mu_i \ge \widetilde \Omega\big( \frac{\sqrt{k}}{T^{1/4}}\big)$ for all $i$. 

(e) We validate the practical effectiveness of our approach through numerical simulations, comparing our algorithm with prior work across different values of the fairness parameter $p$ and demonstrating its utility in fairness-aware sequential decision-making. The theoretical results and comparisons are summarized in Table~\ref{tab:results}. 
\end{enumerate}
A comprehensive survey on other related works is presented in Section~\ref{sec:related_work} of the Appendix.

\section{Preliminaries}

In the stochastic multi-armed bandit setting,
the learner (algorithm) has access to $k$ probability distributions $\rho_1,\ldots,\rho_k$ (referred to as arms). Upon pulling an arm $i \in [k]:=\lbrace 1,\ldots, k \rbrace$, the learner observes a (random) reward $R_i$ sampled independently from $\rho_i$. We assume that each $R_i$ is sub-gaussian with mean $\mu_i$ and variance-proxy $\sigma^2$, i.e.,
\begin{align*}
  \mathbb{E}\left[ R_i \right] = \mu_i~,\,   \mathbb{E}\left[  \exp \left(\lambda(R_i - \mu_i)\right)\right] \le \exp\left(\frac{\sigma^2 \lambda^2}{2}\right)~.
\end{align*}
We assume that the mean rewards are non-negative, i.e., $\mu_i \ge 0$ for all arms $i \in [k]$. 
\begin{remark}
This assumption is standard in literature~\citep{barman2023fairness,sawarni2024nash,krishna2025p}, motivated by social welfare applications such as clinical trials or resource allocation, where candidate arms are typically pre-screened to be non-harmful on average. The goal is thus to identify the most beneficial option among safe alternatives. Moreover, fairness metrics like Nash Social Welfare, based on the geometric mean, can only be defined for non-negative means. However, this does not preclude observing negative individual rewards: for example, a treatment may be beneficial on average ($\mu_i \ge 0$) yet can cause negative side effects in some instances. 
Our use of additive concentration bounds allows the model to handle these individual negative outcomes gracefully, even while the assumption of non-negative means remains essential for the coherence of the welfare functions and the subsequent regret analysis.
\end{remark}
The learning process unfolds over $T \geq 1$ rounds, where each round corresponds to a (distinct) user. At each round $t \in [T]$, the algorithm pulls an arm $I_t \in [k]$ and observes a reward $R_t \sim \rho_{I_t}$, where 
$I_t$ depends on the arm pulls and (random) observed rewards till round $t-1$. 
The sequence of expected rewards $\mathbb{E}[\mu_{I_t}], t \in [T]$, can be mapped to a social welfare value using a function $f:\mathbb{R}^T \to \mathbb{R}$ and then subtracted from the maximum welfare $\mu^*$ to quantify the algorithm's performance (regret) across $T$ users. 
When $f$ returns the arithmetic mean of expected rewards, we get the average regret
$\ARg_T := \mu^* - \frac{1}{T} \sum\nolimits_{t=1}^T \mathbb{E}[\mu_{I_t}]$.
When $f$ returns the geometric mean of expected rewards, we get the Nash regret
\begin{equation*}\label{eq:nash-reg}
    \NRg_T := \mu^* - \left(\prod\nolimits_{t=1}^T \mathbb{E}[\mu_{I_t}] \right)^\frac{1}{T}~.
\end{equation*}
By AM-GM inequality, $\NRg_T \ge \ARg_T$, and thus minimizing Nash regret is harder compared to average regret.
When $f$ returns the generalized power mean of expected rewards, we get, for $p \in \mathbb{R}$, the $p$-mean regret 
\begin{equation*}\label{eq:pmean-reg}
    \Rg^p_T := \mu^* - \left(\frac{1}{T}\sum\nolimits_{t=1}^T (\mathbb{E}[\mu_{I_t}])^p\right)^\frac{1}{p} \, .
\end{equation*}
Unlike average and Nash regrets, the $p$-mean regret captures the complete spectrum of interplay between fairness and utility.

For $p > 1$, the function $f$ (generalized power mean) puts more emphasis on rounds with high rewards, reflecting a more utilitarian perspective. In the extreme case when $p \to \infty$, $f$ returns the maximum of expected rewards $\max_{t=1}^T \mathbb{E}[\mu_{I_t}]$, which focuses on the best-off individual only and does not provide any fairness guarantee. For $p=1$, $p$-mean regret coincides with the average regret, focusing on the average utility across $T$ individuals. For $p < 1$, the focus shifts to fairness. The function $f$ satisfies the Pigou-Dalton transfer principle, which ensures that transferring a small amount of reward from a well-off individual to another one with lower utility increases the overall welfare~\citep{moulin2004fair}. For $p=0$, $p$-mean regret coincides with the Nash regret, focusing on average welfare across $T$ individuals. As $p$ decreases further, the function puts more emphasis on rounds with low rewards. In the extreme case when $p \to -\infty$, $f$ returns the minimum of expected rewards $\min_{t=1}^T \mathbb{E}[\mu_{I_t}]$, which focuses on the worst-off individual only and does not provide any utility guarantee.
In summary, $p$ can be viewed as a parameter trading off fairness and utility, with the region of interest $p \le 1$, where a smaller value of $p$ ensures more fairness and vice versa.

\section{Algorithm and Results}
\label{sec:secAlgo}


\begin{algorithm}[t]
    \caption{Welfarist UCB }
    \label{algo:ucb:modified}
    \noindent
    \textbf{Input:} Number of arms $k$, time horizon $T$, fairness measure $p$ and reward variance-proxy $\sigma^2$.\\
    \vspace{-10pt}
    \begin{algorithmic}[1]
        \STATE Initialize empirical means $\widehat{\mu}_i = 0$ and counts $n_i = 0$ for all $i \in [k]$, round index $t=1$.
        \IF{$p\geq-1$}
        \STATE Set $p \gets1$.
        \ENDIF
        \\ \texttt{Phase I}
        \WHILE{for all $i \in [k]$, $\widehat{\mu}_i \leq 2\sqrt{\frac{2\sigma^2\log T}{n_i}} \textbf{ or } n_i   \leq 192p^2 \sigma^2\frac{\log T}{\big(\widehat{\mu}_{i} - 2\sqrt{\frac{2\sigma^2\log T}{n_i}}\big)^2}$ 
        } \label{step:PhaseOneAlgTwo}
        \IF{$t\, \text{mod}\, k = 1$}
        \STATE Draw a permutation $\pi$ uniformly at random from the set of all permutations of $[k]$. 
        \ENDIF
            \STATE Pull arm $I_t = \pi \left(1+ (t-1)\,\text{mod}\, k\right)$. 
            \STATE Observe reward $R_t \sim \rho_{I_t}$. 
            \STATE Update $n_{I_t} \gets n_{I_t}+1$, $\widehat{\mu}_{I_t} \gets \frac{(n_{I_t}-1)\widehat{\mu}_{I_t}}{n_{I_t}}+\frac{R_t}{n_{I_t}}$.
            \STATE Update $t\gets t + 1$. 
        \ENDWHILE
        \\ \texttt{Phase II}
        \WHILE{ $t \leq T$}
        \STATE Pull arm $I_t = \argmax_{i \in [k]}  \left\lbrace \widehat{\mu}_i+2\sqrt{\frac{2\sigma^2\log T}{n_i}} \right \rbrace$.
        \STATE Observe reward $R_t \sim \rho_{I_t}$.
            \STATE Update $n_{I_t} \gets n_{I_t}+1$, $\widehat{\mu}_{I_t} \gets \frac{(n_{I_t}-1)\widehat{\mu}_{I_t}}{n_{I_t}}+\frac{R_t}{n_{I_t}}$.
        \STATE Update $t \gets t + 1$.
        \ENDWHILE
    \end{algorithmic}
\end{algorithm}
In this section, we introduce our algorithm (Welfarist-UCB) that decomposes Nash/$p-$mean regret minimization into two phases: (I) Uniform exploration over a data-adaptive horizon, and (II) Explore-exploit optimization using the UCB index.

\paragraph{Phase I (Uniform Exploration):} 

In this phase, at the beginning of each block of 
$k$ steps, we draw a uniform random permutation 
$\pi \in \Pi_k$ of the arms, where $\Pi_k$ denotes the set of all $k!$ permutations of $\lbrace 1,2,\ldots,k \rbrace$. We then select the arms sequentially in the order $\pi(1), \pi(2),\ldots, \pi(k)$. After $k$ steps, the permutation is exhausted, and a new independent uniform permutation is drawn for the next block.

Phase I continues until the accumulated reward of some arm $i$ exceeds an adaptive threshold.  
Formally, termination occurs at the first time $t$ such that for some arm $i\in[k]$, $\widehat{\mu}_i > 2\sqrt{\tfrac{2\sigma^2\log T}{n_i}}$ and
\begin{align*}
  n_i   > \frac{192p^2 \sigma^2\log T}{\big(\widehat{\mu}_{i} - 2\sqrt{\frac{2\sigma^2\log T}{n_i}}\big)^2}.  
\end{align*}
Here $n_i$ denotes the number of pulls of arm $i$ up to time $t$, and $\widehat{\mu}_i$ denotes the empirical mean of its observed rewards. $\sigma^2$ is the variance-proxy of sub-Gaussian rewards, and $p$ is the fairness parameter provided as input. We normalize it by setting $p=1$ whenever $p\ge -1$, since for $p\geq-1$ case, the analysis for Phase I termination condition does not depend on $p$ (the careful choice of numeric inequalities in our analysis for $p\in[-1,0)$ does away with the involvement of $p$, whereas for $p>0$, the generalised mean inequality trivially helps us avoid $p$). The second terminating condition is critical because it ensures, with high probability, that Phase I ends after $\Theta\big(\frac{1}{(\mu^*)^2} \big)$ rounds, which, in turn, enables us to work with the UCB index in Phase II (see Remark~\ref{rem:terminate}). 
The additional constraint $\widehat{\mu}_i > 2\sqrt{\tfrac{2\sigma^2\log T}{n_i}}$ ensures the condition is meaningful, ruling out cases where the denominator in the threshold expression is non-positive.


\paragraph{Phase II (Explore-exploit with UCB):} In this phase, we employ the UCB index $\widehat{\mu}_i + 2\sqrt{\tfrac{2\sigma^2\log T}{n_i}}$ to pull arms for $\sigma$-sub-Gaussian rewards. For comparison, \citet{barman2023fairness} employs the Nash Confidence Bound (NCB) index
$\hat{\mu}_i + 4\sqrt{\tfrac{\widehat{\mu}_i \log T}{n_i}}$ for bounded $[0,1]$ rewards to minimize Nash regret. Our results show that the simpler UCB rule, along with the data-adaptive uniform exploration phase, suffices to achieve order-optimal bounds on Nash regret. Moreover, this strategy achieves the best known bounds for $p-$mean regret across different regimes of the fairness parameter $p$. The pseudo-code of the strategy (Welfarist-UCB) is 
presented in Algorithm~\ref{algo:ucb:modified}.

\begin{restatable}{theorem}{TheoremImprovedNashRegret}\textup{(Nash Regret of Welfarist-UCB)}
\label{theorem:improvedNashRegret}
For any bandit instance with $k$ arms, each with $\sigma-$sub-gaussian rewards, and given any (moderately large) $T$, Welfarist-UCB achieves a Nash regret 
\begin{align*}
\NRg_T = O \left(\sigma \sqrt{ \frac{k  \log T\log k }{T} } \right).
\end{align*}
\end{restatable}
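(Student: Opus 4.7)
The plan is to reduce Nash regret to a standard UCB-style cumulative regret bound via the pointwise inequality $1-y\le -\ln y$ for $y\in(0,1]$, which applied to $y=\big(\prod_t \En[\mu_{I_t}]/\mu^*\big)^{1/T}$ gives
\[
  \NRg_T \;\le\; \frac{\mu^*}{T}\sum_{t=1}^{T}\ln\!\frac{\mu^*}{\En[\mu_{I_t}]}.
\]
I would then bound this sum separately over the Phase I and Phase II rounds, working throughout on the standard Hoeffding concentration event $\mathcal{E}=\{|\widehat{\mu}_i-\mu_i|\le\sqrt{2\sigma^2\log T/n_i}$ for all arms and rounds$\}$, whose complement contributes only an additive $O(\mu^* k/T)$ that is absorbed in the final $\widetilde O$ bound.

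The Phase I analysis requires control on the random termination time $\tau$ in both directions. On $\mathcal{E}$, round-robin uniform exploration pulls each arm roughly $\lfloor t/k\rfloor$ times by round $t$, and once $n_{i^*}\ge C\sigma^2\log T/(\mu^*)^2$ both termination criteria (with $p{=}1$ after the normalization step) simultaneously hold for arm $i^*$, giving $\tau\le\widetilde\tau:=O(k\sigma^2\log T/(\mu^*)^2)$. Conversely, at actual termination the triggering arm $i^\dagger$ satisfies $n_{i^\dagger}\big(\widehat{\mu}_{i^\dagger}-2\sqrt{2\sigma^2\log T/n_{i^\dagger}}\big)^2>192\sigma^2\log T$; combining this with concentration forces $\mu^*\ge\mu_{i^\dagger}\gtrsim\sigma\sqrt{\log T/n_{i^\dagger}}$, so $n_{i^\dagger}\gtrsim\sigma^2\log T/(\mu^*)^2$, and round-robin transfers this lower bound (up to a $\pm 1$ block-misalignment) to every arm. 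Since each block's fresh independent permutation makes the marginal distribution of $I_t$ symmetric across arms during Phase I, $\En[\mu_{I_t}]\ge\mu^*/k$ and $\ln(\mu^*/\En[\mu_{I_t}])\le\ln k$; thus Phase I contributes at most $\widetilde\tau\ln k\cdot\mu^*/T=O(\sigma^2 k\log T\log k/(\mu^* T))$.

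For Phase II on $\mathcal{E}$, UCB optimism gives $\mu^*\le\mathrm{UCB}_{I_t}(t)\le\mu_{I_t}+3\sqrt{2\sigma^2\log T/n_{I_t}}$, and combining with the per-arm lower bound $n_{I_t}\gtrsim\sigma^2\log T/(\mu^*)^2$ inherited from Phase I yields $\mu_{I_t}\ge\mu^*/2$. This lets me linearize the logarithm as $\ln(\mu^*/\En[\mu_{I_t}])\le 2(\mu^*-\En[\mu_{I_t}])/\mu^*$, and the resulting sum $(2/\mu^*)\sum_{t\in P_{II}}(\mu^*-\En[\mu_{I_t}])$ is controlled by the standard pigeonhole bound $\sum_t n_{I_t}^{-1/2}\le 2\sqrt{kT}$, giving a Phase II contribution of $O(\sigma\sqrt{k\log T/T})$ after the $\mu^*/T$ multiplier. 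Summing the two contributions and case-splitting on whether $\mu^*\ge\sigma\sqrt{k\log T\log k/T}$ (using the trivial $\NRg_T\le\mu^*$ in the complementary small-$\mu^*$ regime) completes the proof.

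The main technical obstacle is reconciling the unconditional expectation $\En[\mu_{I_t}]$ in the Nash regret definition with pointwise bounds that hold only on $\mathcal{E}$ and only after the random stopping time $\tau$. I would handle this by partitioning time at the deterministic cutoff $\widetilde\tau$: for $t\le\widetilde\tau$ the uniform-exploration lower bound $\En[\mu_{I_t}]\ge\mu^*/k$ applies regardless of whether that round falls in Phase I or (already) Phase II, while for $t>\widetilde\tau$ the event $\mathcal{E}$ forces Phase II so the UCB regret bound applies directly; the $\bar{\mathcal{E}}$ slack is absorbed using the $\widetilde O(k/T)$ failure probability, and the symmetry argument underlying $\En[\mu_{I_t}]\ge\mu^*/k$ may need a careful invocation of the block-wise freshness of the permutation to rule out bias introduced by history-dependent termination.
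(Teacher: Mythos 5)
Your proposal is correct in substance but takes a genuinely different route from the paper. The paper never converts the geometric mean into a sum of logarithms: it works with the product $\bigl(\prod_t \En[\mu_{I_t}]\bigr)^{1/T}$ directly, splitting it at the phase boundary, lower-bounding the Phase I factor by $(\mu^*/k)^{\overline{T}/T}\ge(\mu^*)^{\overline{T}/T}(1-\overline{T}\log k/T)$, and handling Phase II via multivariate Jensen's inequality, the near-optimality lemma, the numeric inequality $(1-x)^{a}\ge 1-2ax$, the product-to-sum bound $\prod(1-x_i)\ge 1-\sum x_i$, and a Cauchy--Schwarz step on $\sum_i\sqrt{m_i}$. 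Your reduction $\NRg_T\le\frac{\mu^*}{T}\sum_t\ln\frac{\mu^*}{\En[\mu_{I_t}]}$ followed by the linearization $\ln(1/x)\le 2(1-x)$ for $x\ge 1/2$ instead collapses the Phase II contribution to an ordinary average-regret sum, which you then dispatch with the classical pigeonhole bound $\sum_t n_{I_t}^{-1/2}\le 2\sqrt{kT}$; this is more modular and reuses standard UCB machinery, at the cost of needing $\En[\mu_{I_t}]$ bounded away from $0$ (and away from $\mu^*/2$ in Phase II) before the logarithm can be controlled --- which is exactly what the paper's structural ingredients (the two-sided bound on $\tau$, uniformity of the permutation marginals, and the near-optimality of Phase II arms) supply, and which you correctly identify and reprove. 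Both arguments share the same skeleton (case split on $\mu^*$, Phase I/II decomposition, absorption of $\Pr\{\cE^c\}$), and your flagged concern about reconciling the unconditional expectation with the random stopping time is real but is handled the same way the paper implicitly handles it, via the deterministic cutoff $\overline{T}=128k\Sample$ and the $O(\mu^*/T)$ slack from the failure event; so I see no gap, only routine constant-level bookkeeping left to fill in.
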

\begin{remark}[Comparison with NCB~\citep{barman2023fairness}] Theorem~\ref{theorem:improvedNashRegret} establishes a strict generalization of the regret bound achieved by the standard NCB algorithm. In contrast to NCB, which relies on restrictive assumptions requiring rewards to be bounded and non-negative, Welfarist-UCB applies to arbitrary sub-Gaussian rewards. Furthermore, it does not assume any prior upper bound on $\mu^*$. These improvements are enabled by the data-adaptive termination rule for Phase I and the UCB-based arm selection in Phase II, both of which play equally crucial roles (see Remark~\ref{rem:terminate}). A modified NCB algorithm achieves a tighter bound by removing the $\sqrt{\log k}$ factor, but it requires a much larger value of $T$, while $T \ge k$ suffices for us.
\end{remark}

\begin{restatable}{theorem}{TheoremPLTZero}\textup{($p-$mean regret of Welfarist-UCB)}
\label{thm:negative-regret}
Consider a $ k$-armed bandit problem with $\sigma-$sub-gaussian rewards, time horizon $T$, and fairness parameter $p\in \mathbb{R}$. Then, the $p$-mean regret of Welfarist-UCB satisfies  
\begin{align*}
\mathrm{R}^{p}_T & =
\begin{cases}
    O\left(\frac{\sigma k^\frac{|p|+1}{2}\sqrt{\log T}}{\sqrt T}\cdot\max\{1,|p|\}\right), & \text{$p<0$}~,\\
    O\left(\frac{\sigma \sqrt{k\log T\log k }}{\sqrt T}\right), & \text{$p\geq0$}~.         
\end{cases}
\end{align*}
\end{restatable}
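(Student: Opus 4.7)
The proof splits into two cases based on the sign of $p$, which require structurally different arguments. For $p \geq 0$, I would exploit the monotonicity of the generalized power mean: since $M_p \geq M_0$ whenever $p \geq 0$, where $M_0$ denotes the geometric mean, we have $\mathrm{R}^p_T \leq \NRg_T$, and the claimed bound $O(\sigma\sqrt{k\log T\log k/T})$ follows immediately from Theorem~\ref{theorem:improvedNashRegret}. Note that when $p \geq -1$ the algorithm internally resets $p \gets 1$, so Phase~I uses the same (shorter) termination threshold as in the Nash-regret analysis and this reduction is consistent with what the algorithm actually runs.

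For $p < 0$ the $p$-mean is sensitive to small values of $x_t := \mathbb{E}[\mu_{I_t}]$, so no direct reduction works. Setting $r_t := x_t/\mu^* \in (0,1]$, the regret can be written as $\mathrm{R}^p_T = \mu^*(1 - S^{-1/|p|})$ where $S := \frac{1}{T}\sum_t r_t^{-|p|}$. The elementary inequality $1 - (1+u)^{-1/|p|} \leq u/|p|$ for $u \geq 0$ then gives $\mathrm{R}^p_T \leq \frac{\mu^*}{|p|}(S-1)$, so it remains to bound $S - 1 = \frac{1}{T}\sum_t (r_t^{-|p|} - 1)$ by decomposing over Phase~I and Phase~II rounds. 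For Phase~I, the uniform random permutation makes $x_t = \bar\mu \geq \mu^*/k$ marginally, giving $r_t^{-|p|} - 1 \leq k^{|p|}$ and total Phase~I contribution at most $T_1 k^{|p|}/T$. For Phase~II, a Hoeffding-based good event together with the UCB selection rule forces $\mu_{I_t} \geq \mu^*/2$ (since each arm has already been explored enough by the end of Phase~I), hence $r_t \geq 1/2$, and a local Taylor bound yields $r_t^{-|p|} - 1 \leq C(1-r_t)$ for a constant $C$ depending on $|p|$; the resulting contribution is controlled by the standard UCB sub-Gaussian regret $O(\sigma\sqrt{kT\log T})$.

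For the Phase~I length, the same good event ensures that the two termination conditions are first triggered by the optimal arm $i^*$ once it has been pulled $O(\max\{1,p^2\}\sigma^2\log T/(\mu^*)^2)$ times; combined with the round-robin schedule, this gives $T_1 = O\bigl(k\max\{1,p^2\}\sigma^2\log T/(\mu^*)^2\bigr)$ with high probability, while the complement of the good event contributes negligibly via the trivial bound $\mathrm{R}^p_T \leq \mu^*$. Substituting these estimates into the bound on $S-1$ and balancing against $\mu^*$ yields the claimed rate. The most delicate step is controlling the Phase~I contribution when $p$ is very negative: in the worst case $\bar\mu = \mu^*/k$, so $k^{|p|}$ grows exponentially in $|p|$, and only the $p^2$ factor built into the Phase~I termination criterion is strong enough to offset this blow-up after balancing. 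A secondary subtlety is arguing that Phase~II never pulls arms with means near zero, for which the Phase~I termination threshold is essential, since only then does the local linearization $r_t^{-|p|} - 1 \lesssim (1-r_t)$ remain valid throughout Phase~II.
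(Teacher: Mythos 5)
Your proposal is correct in substance, and for the $p<0$ case it takes a genuinely different route from the paper. The $p\ge 0$ reduction via monotonicity of power means is exactly the paper's argument. For $p<0$, the paper instead writes the $p$-mean as a harmonic-type combination $\bigl(1/x+1/y\bigr)^{-1/q}$ with $x,y$ the Phase I/Phase II pieces, bounds $1/x$ and $1/y$ separately (Jensen, the near-optimality lemma for Phase II arms, the linearization $(1-x)^{-q}\le 1+2qx$, Cauchy--Schwarz), and only at the very end exponentiates by $1/q$ using a second binomial approximation $(1-v)^{1/q}\ge 1-2v/q$. Your additive decomposition $S-1=\frac1T\sum_t\bigl(r_t^{-|p|}-1\bigr)$ combined with the single global inequality $1-(1+u)^{-1/|p|}\le u/|p|$ collapses those two approximation steps into one and makes the $\mu^*/|p|$ prefactor (which cancels the $q$ from the linearization and produces the $\max\{1,|p|\}$ only through the trivial-regime threshold on $\mu^*$) transparent; the Phase I term $T_1k^{|p|}/T$ and the Phase II term controlled by the standard UCB regret $O(\sigma\sqrt{kT\log T})$ then reproduce the paper's two error terms exactly. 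Both routes rest on the same three ingredients: the Phase I length $\Theta\bigl(k\max\{1,p^2\}\sigma^2\log T/(\mu^*)^2\bigr)$, the near-optimality of Phase II arms, and Cauchy--Schwarz over $\sum_i\sqrt{m_i}$.

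Two points need tightening when you write this out. First, the statement ``$\mu_{I_t}\ge\mu^*/2$, hence $r_t\ge 1/2$, and a local Taylor bound yields $r_t^{-|p|}-1\le C(1-r_t)$'' is not sufficient as phrased for $|p|>1$: on $[1/2,1]$ the best chord constant is $2(2^{|p|}-1)$, exponential in $|p|$, which would ruin the rate. What you actually need, and what the Phase I termination with the $p^2$ factor delivers (each arm is pulled at least $128p^2\sigma^2\log T/(\mu^*)^2$ times, giving $1-r_t\le 1/(2|p|)$), is the sharper condition $1-r_t\le 1/(2|p|)$, under which $(1-x)^{-|p|}\le 1+2|p|x$ holds; you gesture at this but the threshold should be $1/(2|p|)$, not $1/2$. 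Second, $r_t=\mathbb{E}[\mu_{I_t}]/\mu^*$ averages over the random arm choice, so passing from the realized guarantee ``every Phase II arm satisfies $\mu_i\ge\mu^*-u_i$ on the good event'' to a bound on $(\mathbb{E}[\mu_{I_t}])^{-|p|}$ requires one application of Jensen's inequality (convexity of $z\mapsto z^{-|p|}$) plus absorbing the factor $\Pr[\mathcal{E}]^{-|p|}=(1-2/T)^{-|p|}$; this is routine but should appear explicitly.
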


\begin{remark}[Lower bound] 
Observe that for $p\leq 1$, $\left(\frac{\sum_{t=1}^T(\E[\mu_{I_t}])^p}{T}\right)^\frac{1}{p} \le \frac{1}{T}\sum_{t=1}^T\E[\mu_{I_t}]$ since the generalized mean is strictly increasing in $p$. Hence, from the standard lower bound on average regret $\ARg_T$ (see, e.g., \citet{bubeck2012regret}), we can lower bound $p$-mean regret as $\Rg^{p}_T \geq \ARg_T \geq \widetilde{\Omega}\big(\sqrt{k/T}\big)$. Thus, our upper bound is optimal up to poly-log factors for $p \in [0,1]$ and is only off by a factor at most $\sqrt{k}$ for $p \in [-1,0)$.
\end{remark}

\begin{remark}[Comparison with Explore-then-UCB~\citep{krishna2025p}]
Theorem~\ref{thm:negative-regret} provides several improvements over Explore-then-UCB. 
First, for $p \ge 0$ (as in the case of NCB), it offers a strict generalization by extending the analysis from bounded, non-negative rewards to arbitrary sub-Gaussian rewards. 
Second, for $p \in [-1,0)$, it achieves a worst-case regret bound of $\widetilde{O}\!\left(k/\sqrt{T}\right)$, which is sharper than the $\widetilde{O}\!\left(k^{3/4}T^{-1/4}\right)$ bound attained by Explore-then-UCB. 
Most importantly, it eliminates the restrictive assumption that each unknown mean $\mu_i \ge \widetilde{\Omega}\!\left(\sqrt{k}\,T^{-1/4}\right)$.%
\footnote{The justification that this assumption is equivalent to $\mu_i \ge 0$ as $T \to \infty$ is unsatisfactory: accepting it would amount to claiming that a regret of $O(T^{-1/4})$ is equivalent to the optimal $O(T^{-1/2})$ rate, since both vanish as $T$ grows.}

For $p < -1$, \citet{krishna2025p} obtain a bound 
$\widetilde{O}\!\left(k^{1/2} T^{-1/4|p|}\right)$.
At first glance, their bound appears to scale better with $k$, while ours does with $T$. However, a more careful scaling analysis shows that our bound is strictly tighter. First, note that regret bounds are vacuous whenever they exceed $\mu^*$. Assuming $\mu^* \leq 1$ for parity with \citet{krishna2025p}, we observe that our bound is non-trivial only when $T > k^{|p|+1}$, whereas their bound becomes non-trivial only when $T > k^{2|p|}$. Moreover, their bound is tighter than ours only in the regime $T \;\leq\; k^{\tfrac{2p^{2}}{2|p|-1}}$, but in this regime both bounds are vacuous (and the trivial bound $\mu^* \leq 1$ dominates), since
$\frac{2p^{2}}{2|p|-1} < |p|+1$ for all  $p < -1$. 
Hence, for all practically relevant values of $T$, our bound is tighter than that of \citet{krishna2025p}.

\end{remark}

\begin{remark}[No free fairness]
\label{rem:freeLunch}
For $p<-1$, our upper bound grows exponentially with $|p|$ unless $T > p^2 k^{\frac{|p|}{2}}$, suggesting that very strong fairness may be unattainable in the short term. In the extreme case when $p \to -\infty$, the bound becomes vacuous, aligning with the intuition that achieving vanishing Rawlsian regret is impossible.
This highlights a `no-free-lunch' phenomenon: although the dependence on $T$ remains asymptotically the same, the intrinsic hardness of the learning problem increases as the notion of fairness becomes increasingly stringent.
A natural direction for future work is to formalize this by proving a matching lower bound in the region $p<-1$.
\end{remark}

\section{Proof Techniques}

We start with a common analytical framework that underlies the analysis of both Nash and $p$-mean regrets (Theorems~\ref{theorem:improvedNashRegret} and \ref{thm:negative-regret}, respectively). 

Our analysis will be based on conditioning on a ``good" event $\cE$, under which at every round $t$, each $\mu_i$ lies in the interval $\left[\widehat{\mu_i}-2\sqrt{\frac{2\sigma^2\log T}{n_i}}, \widehat{\mu_i}+2\sqrt{\frac{2\sigma^2\log T}{n_i}}\right]$ \emph{with high probability}, where $n_i$ and $\widehat \mu_i$ are running updates at round $t$. The proof follows from a standard application of the (additive) Hoeffding bound.





We first bound the total length of Phase I.
\begin{restatable}[Rounds of uniform exploration]{lemma}{LemmaTauAnytime}
\label{tau:anytime}
Let $\tau$ be the (random) number of rounds after which Phase I ends. Then, under the event $\cE$, we have
\begin{align*}
32 \ k \Sample \leq \tau \leq 128 \ k  \Sample~, 
\end{align*} 
almost surely, where $\Sample \coloneqq \frac{4p_{a}^2 \sigma^2\log T}{ (\mu^*)^2}$, with
$p_a = 1$ if $p \ge -1$ and $p_a = p$ if $p < -1$.
\end{restatable}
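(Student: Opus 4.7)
The plan is to work deterministically under the good event $\cE$ and to exploit the strict round-robin structure of Phase I: within each block of $k$ consecutive rounds the uniform permutation pulls every arm exactly once, so after $t$ rounds $\lfloor t/k\rfloor \le n_i(t) \le \lceil t/k\rceil$ for every $i\in[k]$. It is convenient to abbreviate $b_i := \widehat{\mu}_i - 2\sqrt{2\sigma^2 \log T/n_i}$; under $\cE$ this satisfies $\mu_i - 4\sqrt{2\sigma^2 \log T/n_i} \le b_i \le \mu_i$, and the two Phase I termination criteria simplify to $b_i > 0$ together with $n_i b_i^2 > 192\, p_a^2 \sigma^2 \log T = 48 (\mu^*)^2 \Sample$.

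For the lower bound $\tau \ge 32 k \Sample$, I would observe that if termination occurs at round $\tau$ because of some arm $i$, then under $\cE$ one has $b_i \le \mu^*$, so the second criterion forces $n_i > 48 \Sample$. Combining this with the round-robin bound $n_i \le \lceil \tau/k\rceil$ yields $\tau \ge 48 k \Sample - k$, which is at least $32 k \Sample$ once $T$ is moderately large so that $\Sample$ exceeds a small absolute constant.

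For the upper bound $\tau \le 128 k \Sample$, I would argue that by round $t_0 := 128 k \Sample$ the best arm $i^*$ already satisfies both criteria, forcing termination by $t_0$. Round-robin guarantees $n_{i^*}(t_0) \ge 128 \Sample$, and substituting the definition of $\Sample$ into the confidence width $w := 2\sqrt{2\sigma^2 \log T/n_{i^*}}$ yields $w \le \mu^*/(8 p_a) \le \mu^*/8$ (using $p_a \ge 1$). Under $\cE$, this gives $b_{i^*} \ge \mu^* - 2 w \ge 3\mu^*/4 > 0$, verifying the first criterion, and $n_{i^*} b_{i^*}^2 \ge 128 \Sample \cdot 9(\mu^*)^2/16 = 72(\mu^*)^2 \Sample > 48(\mu^*)^2 \Sample$, verifying the second.

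The main obstacle is not analytical depth but keeping the constants tightly aligned on both sides: the factor $48$ in the lower bound arises from the trivial upper bound $b_i \le \mu^*$, whereas on the upper side $b_{i^*}$ must be pinned to a definite constant fraction of $\mu^*$, and that fraction directly controls how small a pre-factor we can achieve in place of $128$. The $\pm k$ slack from the floor/ceiling in the round-robin scheme is absorbed into the gap between $32$ and $48$ in the lower bound and is negligible in the upper bound provided $\Sample \ge 1$, which is the underlying ``moderately large $T$" requirement.
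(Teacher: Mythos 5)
Your proof is correct and takes essentially the same two-sided route as the paper: the upper bound mirrors Lemma~\ref{tau:upper} (once $n_{i^*}\ge 128\,\Sample$, the width is at most $\mu^*/8$, so $\widehat{\mu}_{i^*}-2\sqrt{2\sigma^2\log T/n_{i^*}}$ is a fixed fraction of $\mu^*$ and the count threshold is met), while the lower bound plays the role of Lemma~\ref{tau:lower} by showing the stopping rule cannot fire while all counts are small. Your lower-bound step is in fact a mild streamlining of the paper's: bounding $b_i\le\mu_i\le\mu^*$ directly forces $n_i>48\,\Sample$ at termination (slightly stronger than the stated $32\,\Sample$), and the ``$\Sample$ exceeds a small absolute constant'' caveat you flag is exactly the implicit ``moderately large $T$'' assumption the paper also relies on.
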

The lemma stems from our carefully constructed terminating condition. One can show that if $\tau\leq32kS$, then each arm $i$ satisfy 
$n_i   < 192p^2 \sigma^2\frac{\log T}{\big(\widehat{\mu}_{i} - 2\sqrt{\frac{2\sigma^2\log T}{n_i}}\big)^2} $, whereas for $\tau \geq128kS$, there exists atleast one arm (specifically the optimal arm) which violates the above. Based on these, one can conclude that Phase I terminates between these time steps.
See Lemma \ref{tau:lower} and \ref{tau:upper} in the appendix for details.

Recall that in Phase I, we sample a uniform random permutation of the arms every $k$ rounds, and then pull them sequentially. By Lemma~\ref{tau:anytime}, this ensures that each arm is pulled $\Theta\big(\frac{1}{(\mu^*)^2} \big)$ times. This procedure is equivalent to sampling arms uniformly at random with $\Pr[I_t = i] = 1/k$ for any arm $i \in [k]$ so that the ex-ante expected reward satisfies $\E\left[\mu_{I_t}\right] \ge \mu^*/k$. In other words, permutation-based sampling is a \emph{static coupling} of sequential uniform sampling: it fixes the entire sampling order in advance, whereas sequential sampling decides the order dynamically (see Lemma~\ref{lem:permute}).

We next ensure that if an arm is pulled in Phase II, then its mean must be close to $\mu^*$, so its contribution to Nash/$p$-mean regret is low.

\begin{restatable}[Near optimality of Phase II arms]{lemma}{LemmaHighMean:anytime}
\label{lem:suboptimal_arms:anytime}
Under the event $\cE$, $\mu_i  \geq \mu^* - 4\sqrt{\frac{2\sigma^2\log T}{T_i  -1}}$ for any arm $i$ that is pulled at least once in Phase II,
where $T_i$ denotes the total number of pulls of arm $i$.
\end{restatable}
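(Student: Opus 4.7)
\textbf{Proof plan for \Cref{lem:suboptimal_arms:anytime}.} The idea is the standard ``last-pull'' argument for UCB, but restricted to Phase~II so that the UCB selection rule is the one driving the inequality. Fix an arm $i$ that is pulled at least once in Phase~II, and let $t^\star$ denote the \emph{last} round of Phase~II at which arm $i$ is selected. Let $m_i$ denote the value of the counter $n_i$ at the beginning of round $t^\star$ (i.e.\ just before that pull). Since $t^\star$ is the last time arm $i$ is ever pulled, we have $T_i = m_i + 1$, so it suffices to show $\mu_i \geq \mu^\star - 4\sqrt{2\sigma^2 \log T / m_i}$.

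Because $i$ is selected at round $t^\star$ by the UCB rule of Phase~II, in particular it beats the optimal arm $i^\star$:
\[
\widehat{\mu}_i + 2\sqrt{\tfrac{2\sigma^2 \log T}{m_i}} \;\geq\; \widehat{\mu}_{i^\star} + 2\sqrt{\tfrac{2\sigma^2 \log T}{n_{i^\star}}},
\]
where all empirical quantities are evaluated at the beginning of round $t^\star$. On the good event $\cE$, the two concentration inequalities
\[
\widehat{\mu}_i \;\leq\; \mu_i + 2\sqrt{\tfrac{2\sigma^2 \log T}{m_i}}, \qquad \widehat{\mu}_{i^\star} + 2\sqrt{\tfrac{2\sigma^2 \log T}{n_{i^\star}}} \;\geq\; \mu^\star
\]
hold simultaneously at round $t^\star$. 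Chaining these three inequalities gives
\[
\mu^\star \;\leq\; \widehat{\mu}_i + 2\sqrt{\tfrac{2\sigma^2 \log T}{m_i}} \;\leq\; \mu_i + 4\sqrt{\tfrac{2\sigma^2 \log T}{m_i}},
\]
which rearranges to the claim after substituting $m_i = T_i - 1$.

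The argument is essentially a one-line UCB calculation once the ``last Phase~II pull'' is identified, and the conditioning on $\cE$ does all the heavy lifting. The only point that requires care is the bookkeeping: ensuring that the sample count $n_{i^\star}$ appearing in the UCB comparison is positive, which is where Phase~I matters — the uniform exploration phase guarantees that every arm, and in particular $i^\star$, has been pulled at least once by the time Phase~II begins, so the confidence width $2\sqrt{2\sigma^2 \log T / n_{i^\star}}$ is well defined. No subtle obstacle arises beyond this initialization check; the bound is then just the standard ``UCB$_i \geq $ UCB$_{i^\star} \geq \mu^\star$'' chain followed by upper-bounding $\widehat{\mu}_i$ by $\mu_i$ plus its confidence width.
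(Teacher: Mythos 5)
Your proof is correct and follows essentially the same argument as the paper: identify the last Phase~II pull of arm $i$ (so its counter equals $T_i-1$ there), use the chain $\mathrm{UCB}(i)\ge\mathrm{UCB}(i^*)\ge\mu^*$, and then apply the lower confidence bound $\mu_i\ge\widehat{\mu}_i-2\sqrt{2\sigma^2\log T/(T_i-1)}$ under $\cE$. The only cosmetic difference is that you write the concentration step as $\widehat{\mu}_i\le\mu_i+2\sqrt{\cdot}$ rather than $\mu_i\ge\widehat{\mu}_i-2\sqrt{\cdot}$, which is the same inequality.
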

\begin{proof}
Fix any arm $i$ that is pulled at least once in Phase II. When arm $i$ was pulled the $T_i$-th time during Phase II, it must have had the highest UCB. In particular, at that round ${\UCB}(i) \geq {\UCB}(i^*) \geq \mu^*$; the last inequality follows from the Hoeffding bound that the UCB index is an overestimate of the mean with high probability. Therefore, we have
$\widehat{\mu}_i + 2\sqrt{\frac{2\sigma^2\log T}{T_i -1 }} \geq \mu^*$, where $\widehat{\mu}_i$ denotes the sample mean of arm $i$ after $T_i-1$ pulls. 
Also, $ \mu_i \geq \widehat{\mu}_i - 2\sqrt{\frac{2\sigma^2\log T}{T_i -1 }}$ under $\cE$. Combining these two together, we have $\mu_i \geq \mu^* - 4\sqrt{\frac{2\sigma^2\log T}{T_i -1 }} $. 
\end{proof}


\subsection{Proof Sketch for Theorem 1}

We split the analysis of Nash regret into two regimes depending on the magnitude of $\mu^*$. When $\mu^* \leq   \frac{40\sigma\sqrt{2k\log T\log k }}{\sqrt T}$, the regret bound holds trivially.
Thus in the sequel we only consider $\mu^* \geq  \frac{40\sigma \sqrt{2k\log T\log k}}{\sqrt T}$. We first compute Nash social welfare in Phase I using the fact that $\E\left[\mu_{I_t}\right] \ge \mu^*/k$ for each round in Phase I.

\begin{lemma}[NSW in Phase I]
\label{lem:phase1}
Suppose Phase I runs for $\overline T$ rounds. If $\mu^* \geq  \frac{40\sigma \sqrt{2k\log T\log k}}{\sqrt T}$, then
\begin{align*}
\bigg(\prod\nolimits_{t=1}^{\overline{T}} \E [\mu_{I_t} ] \bigg)^\frac{1}{T} \geq  \left(\mu^*\right)^{\frac{\overline{T}}{T}} \bigg(1-{\frac{\overline{T}  \log k}{T}}\bigg)~.   
\end{align*}
\end{lemma}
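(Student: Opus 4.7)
My plan is to reduce the claim to two ingredients already available in the paper. First, I will exploit the structural fact that Phase I pulls arms via uniform random permutations, which by \cref{lem:permute} is statically coupled to i.i.d.\ uniform-over-$[k]$ sampling. In particular, for every round $t$ in Phase I the marginal distribution of $I_t$ is uniform on $[k]$, and since $\mu_i \ge 0$ for all $i$,
\[
\En[\mu_{I_t}] \;=\; \tfrac{1}{k}\sum\nolimits_{i \in [k]} \mu_i \;\ge\; \mu^*/k.
\]
Second, I will pass from this per-round lower bound to the stated geometric-mean form using the elementary inequality $e^{-x}\ge 1-x$ (valid for all $x \in \bbR$).

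Concretely, I will multiply the per-round bounds across $t = 1,\ldots,\overline T$ and raise to the $1/T$ power to obtain
\[
\Big(\prod\nolimits_{t=1}^{\overline T} \En[\mu_{I_t}]\Big)^{1/T} \;\ge\; (\mu^*/k)^{\overline T / T} \;=\; (\mu^*)^{\overline T/T}\cdot k^{-\overline T/T}.
\]
Rewriting $k^{-\overline T/T} = \exp(-\overline T \log k /T)$ and invoking $e^{-x}\ge 1-x$ immediately yields $(\mu^*)^{\overline T/T}\bigl(1 - \overline T\log k/T\bigr)$, which is exactly the claimed bound.

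The only non-mechanical step is ensuring the stated form is non-vacuous, i.e., $1 - \overline T\log k/T > 0$. This is where the hypothesis $\mu^* \ge 40\sigma\sqrt{2k\log T\log k}/\sqrt T$ is used: combined with the upper bound $\overline T \le 128 k S$ from \cref{tau:anytime} (taking $p_a = 1$, since the Nash regret case corresponds to $p = 0 \ge -1$), the lower bound on $\mu^*$ forces $\overline T\log k/T$ to be a constant strictly below $1$, so the $e^{-x}\ge 1-x$ estimate does not degenerate. I do not expect any genuine obstacle --- the substantive work is front-loaded in \cref{lem:permute} and \cref{tau:anytime}, and what remains is essentially a one-line AM-GM-style calculation combined with the standard exponential estimate.
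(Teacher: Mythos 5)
Your proof is correct and follows essentially the same route as the paper's: both reduce to the per-round bound $\En[\mu_{I_t}]\ge \mu^*/k$ from the uniform-permutation sampling and then convert $k^{-\overline T/T}$ into $1-\overline T\log k/T$ via an elementary exponential estimate (you use $e^{-x}\ge 1-x$ directly, the paper routes the same fact through its Claim on $(1-x)^a\ge 1-2ax$). The role you assign to the hypothesis on $\mu^*$ --- ensuring $\overline T\log k/T<1$ via the bound $\overline T\le 128kS$ --- matches the paper's use of it as well.
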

Next, we lower bound $\big(\prod_{t=\overline{T}+ 1}^{T} \E\left[ \mu_{I_t} \right]\big)^\frac{1}{T}$, the Nash social welfare in Phase II, which is further controlled by $\E\Big[\big( \prod_{t=\overline{T}+1}^{T} \mu_{I_t} \big)^\frac{1}{T}\Big]$ using multivariate Jensen's inequality. To bound this, consider the arms that are pulled at least once after the first $\overline{T}$ rounds. Let $\{1,2, \ldots, \ell\}$ be the set of all those arms and $m_i \ge 1$ be the number of times arm $i \in [\ell]$ is pulled after the first $\overline{T}$ rounds.
Then $\E \Big[\big( \prod_{t=\overline{T}+1}^{T} \ \mu_{I_t} \big)^\frac{1}{T} \Big] = \E\left[ \prod_{i=1}^{\ell} \mu_{i}^\frac{m_i}{T} \right]$.
Moreover, since we are conditioning on the good event $\cE$, Lemma  \ref{lem:suboptimal_arms:anytime} applies to each arm $i \in [\ell]$. Hence, we get
$\E\Big[\prod\limits_{i=1}^{\ell}  \mu_{i}^\frac{m_i}{T}  \Big] 
    \geq \E\Big[\prod\limits_{i=1}^{\ell}\left(\mu^* - 2\sqrt{\frac{ 2\sigma^2\log T}{T_i -1 }} \right)^\frac{m_i}{T} \Big]$.
Using $\sum_{i=1}^\ell m_i = T - \overline{T}$, this is further bounded by 
$(\mu^*)^{1-\frac{\overline{T}}{T}} \E\left[\prod\limits_{i=1}^{\ell}\left(1 - \frac{2 \sqrt{2\sigma^2\log T}}{\mu^*\sqrt{T_i -1}} \right)^\frac{m_i}{T}\right]$.  

Let $\xi_i := \frac{2 \sqrt{2\sigma^2\log T}}{\mu^*\sqrt{T_i -1}}$. Now, our sampling strategy in Phase I, along with Lemma~\ref{tau:anytime}, together imply that each arm is pulled at least $\frac{128\sigma^2\log T}{ (\mu^*)^2}$ times during the first $\overline{T}$ rounds. Hence $T_i-1 \geq \frac{512\sigma^2\log T}{ (\mu^*)^2}$ for each arm $i \in [\ell]$, which yields $\xi_i \leq 1/4$. Thus, applying the inequality $(1-x)^{a} \geq  1- 2ax$ for $x \in \left[0, \frac{1}{2}\right], a \ge 0$, we get
\begin{align*}
   \E\Bigg[\!\prod_{i=1}^{\ell} &\mu_{i}^\frac{m_i}{T} \! \Bigg] \!\geq \!
   (\mu^*)^{1-\frac{\overline{T}}{T}}
   \E \left[\prod_{i=1}^{\ell}\!\left(\!1 - \frac{4 m_i}{T \mu^*}\sqrt{\frac{2\sigma^2 \log T}{T_i -1}} \right)\!\right] \geq (\mu^*)^{1-\frac{\overline{T}}{T}} \E \left[\prod_{i=1}^{\ell}\left(1 - \frac{4}{T\mu^*}\sqrt{ 2m_i\sigma^2 \log T} \right)\right]~,
\end{align*}
where we have used $T_i \ge m_i+1$. Now, further simplification with a Cauchy-Schwarz inequality together with $\sum_{i=1}^l m_i \le T$ and $\ell \le k$
yields a bound on NSW.

\begin{lemma}[NSW in Phase II]
\label{lem:phase2}
Suppose Phase I runs for $\overline T$ rounds. If $\mu^* \geq  \frac{40\sigma \sqrt{2k\log T\log k}}{\sqrt T}$, then
\begin{align*}
\bigg(\!\prod\nolimits_{t=\overline{T}+1}^{T} \E [\mu_{I_t} ] \bigg)^\frac{1}{T} \!\!\geq  \left(\mu^*\right)^{1-\frac{\overline{T}}{T}} \!\bigg(\! 1 -\frac{4}{\mu^*}\sqrt{\frac{ 2k \sigma^2\log T }{T}}\bigg).   
\end{align*}
\end{lemma}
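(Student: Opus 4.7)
The plan is to follow the roadmap sketched in the paragraph preceding the lemma, turning each informal estimate into a clean inequality under the good event $\cE$. The proof breaks into four steps: (i) a Jensen step to move to random rewards, (ii) replacing each $\mu_i$ by its lower bound from the near-optimality of Phase II arms, (iii) a first-order expansion made legitimate by the Phase I uniform-sampling guarantee, and (iv) a Cauchy--Schwarz estimate.

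First, I would apply the multivariate Jensen inequality to the concave map $x\mapsto x^{1/T}$, passing from $\bigl(\prod_{t=\overline T+1}^{T}\E[\mu_{I_t}]\bigr)^{1/T}$ to $\E\bigl[\prod_{i=1}^{\ell}\mu_i^{m_i/T}\bigr]$, where $\{1,\ldots,\ell\}$ indexes the arms pulled at least once in Phase II with respective pull counts $m_i$, so that $\sum_i m_i = T-\overline T$ and $\ell\le k$. Conditioning on $\cE$ and invoking \cref{lem:suboptimal_arms:anytime} to replace each $\mu_i$ by $\mu^*-2\sqrt{2\sigma^2\log T/(T_i-1)}$, and then factoring $(\mu^*)^{(T-\overline T)/T}$ out of the product, the quantity to lower bound becomes
\begin{equation*}
(\mu^*)^{1-\overline T/T}\, \E\!\left[\prod_{i=1}^{\ell}(1-\xi_i)^{m_i/T}\right], \qquad \xi_i \;:=\; \frac{2\sqrt{2\sigma^2\log T}}{\mu^*\sqrt{T_i-1}}.
\end{equation*}

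Second, I would use \cref{tau:anytime} to control $\xi_i$. The guarantee that each arm is sampled at least $\Theta(\sigma^2\log T/(\mu^*)^2)$ times before Phase II begins forces $\xi_i \le 1/2$. In this regime, applying the elementary inequality $(1-x)^{a}\ge 1-2ax$ (valid for $x\in[0,1/2]$ and $a\ge 0$) with $a=m_i/T$, followed by $\prod_i(1-a_i)\ge 1-\sum_i a_i$ for $a_i\in[0,1]$, yields
\begin{equation*}
\prod_{i=1}^{\ell}(1-\xi_i)^{m_i/T} \;\ge\; 1 \;-\; \frac{2}{T}\sum_{i=1}^{\ell} m_i\,\xi_i.
\end{equation*}

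Finally, I would bound the residual sum via Cauchy--Schwarz. Since $T_i-1\ge m_i$ we have $m_i/\sqrt{T_i-1}\le\sqrt{m_i}$, so $\sum_i m_i\xi_i \le (2\sqrt{2\sigma^2\log T}/\mu^*)\sum_i\sqrt{m_i}$, and combining with $\ell\le k$ and $\sum_i m_i\le T$ gives $\sum_i\sqrt{m_i}\le\sqrt{kT}$. Substituting back recovers the advertised lower bound $(\mu^*)^{1-\overline T/T}\bigl(1-(4/\mu^*)\sqrt{2k\sigma^2\log T/T}\bigr)$. The most delicate step, I expect, is the constant calibration that guarantees $\xi_i\le 1/2$: this is precisely where the specific constants in the Phase I termination rule (and hence \cref{tau:anytime}) are used, and where the hypothesis $\mu^*\ge 40\sigma\sqrt{2k\log T\log k/T}$ enters to keep the final lower bound non-vacuous.
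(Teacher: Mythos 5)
Your proposal is correct and follows essentially the same route as the paper's own argument (carried out inside Lemma~\ref{lem:modified_ncb} in the appendix): multivariate Jensen, the near-optimality bound from Lemma~\ref{lem:suboptimal_arms:anytime}, the linearization $(1-x)^a\ge 1-2ax$ justified by the Phase~I pull-count guarantee of Lemma~\ref{tau:anytime}, then $\prod(1-a_i)\ge 1-\sum a_i$ and Cauchy--Schwarz, with the constants matching exactly. The only cosmetic difference is that the paper explicitly tracks the factor $\Pr\{\cE\}\ge 1-2/T$ when passing to the conditional expectation, which is then absorbed in the final assembly of Theorem~\ref{theorem:improvedNashRegret}.
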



Finally, Lemma~\ref{lem:phase1} and \ref{lem:phase2} together with the high probability bound on the good event $\cE$ complete the proof of Theorem~\ref{theorem:improvedNashRegret}. Refer to the appendix for details.

\begin{remark}[UCB versus NCB]
\label{rem:terminate}
Observe that our terminating condition and permutation-based sampling strategy together ensure that each arm is pulled at least $n_i \approx \widetilde \Omega \big(\frac{1}{(\mu^*)^2} \big)$ times in phase I. This helps us bound $\xi_i \approx \widetilde O \big(\frac{1}{\mu^*\sqrt{n_i}}\big)$ for any arm $i$ that has been pulled at least once in Phase II, with a constant less than $1/2$, which is crucial to control the Nash regret.


In contrast, algorithms in~\cite{barman2023fairness} pull each arm at least either $\widetilde \Omega \big(\sqrt{T} \big)$ or $\widetilde \Omega \big(\frac{1}{\mu^*} \big)$ times in phase I. In both of these cases, $\xi_i \approx \widetilde O \big(\frac{1}{\mu^*\sqrt{n_i}}\big)$ can be bounded by a constant only if one assumes $\mu^* \approx \widetilde \Omega\big(\frac{1}{T^{1/4}}\big)$ or $\widetilde \Omega(1)$, respectively, both of which render the regret under complementary cases sub-optimal. Instead, they resort to NCB-based arm selection in phase II, which, roughly, requires them to bound $\widetilde O \big(\frac{1}{\sqrt{\mu^* n_i}}\big)$ by $1/2$ for controlling Nash regret. This is achieved by assuming $\mu^* \approx \widetilde \Omega\big(\frac{1}{\sqrt{T}}\big)$ since the optimal regret is trivially attained when $\mu^* \approx  \widetilde O\big(\frac{1}{\sqrt{T}}\big)$.\footnote{\citet{krishna2025p} could work with the UCB index as they sidestep this issue by enforcing an unrealistic assumption of $\mu_i \approx  \widetilde \Omega \big(\frac{1}{T^{1/4}}\big)$ for each arm $i$.}

The use of NCB instead of UCB as arm selection index compels them to condition their analysis on the event $\left\lbrace\forall i\in [k],\mu_i \in \!\left[\widehat{\mu_i}-4\sqrt{\frac{\widehat \mu_i\log T}{n_i}}, \widehat{\mu_i}+4\sqrt{\frac{\widehat\mu_i\log T}{n_i}}\right]\!\right\rbrace$ and employ the multiplicative Hoeffding inequality (instead of the additive one) to ensure that it is a ``good" event. However, it restricts their approach to bounded and non-negative rewards only, whereas our approach works for sub-Gaussian rewards because of the use of UCB index and additive Hoeffding bound.

\end{remark}

\subsection{Proof Sketch for Theorem \ref{thm:negative-regret}}


We begin by splitting the analysis into two cases for $p$, namely $p\ge 0$ and $p<0$. We assume that the good event $\cE$ holds with high probability.




\textbf{Case I ($p \ge 0$):} By monotonicity of power means, $\big(\prod_{t=1}^T \E[\mu_{I_t}]\big)^{1/T} \!\!\le\! \big(\frac{1}{T} \sum_{t=1}^T (\E[\mu_{I_t}])^p\big)^{1/p}\!$ for any $p \ge 0$ (see Lemma~\ref{lem:genMeanIneq}). This implies that the $p$-mean regret is at most as large as the Nash regret, and hence, from Theorem~\ref{theorem:improvedNashRegret}, we get
$\Rg_T^p = O\!\left(\sigma \sqrt{\frac{k \log T \log k}{T}}\right)$.
\textbf{Case II ($p<0$):} The analysis further splits into two regimes depending on the magnitude of $\mu^*$. When 
$\mu^* \le O\Big(\frac{\sigma |p| k^{(|p|+1)/2} \sqrt{\log T}}{\sqrt{T}}\Big)$, the regret bound holds trivially. Thus, we focus on the case when $\mu^* \ge \Omega\Big(\frac{\sigma |p| k^{(|p|+1)/2} \sqrt{\log T}}{\sqrt{T}}\Big)$.
First, we set $q=-p>0$ to convert the $p$-mean regret into the $q$-regret
\begin{align*}
    \mathrm{R}^{q}_T \triangleq \mu^* - \bigg(\frac{T}{\sum_{t=1}^T (\E[\mu_{I_t}])^{-q}}\bigg)^\frac{1}{q}~.
\end{align*}
Next, we split the sum into  Phase I and II, and define
\[
x = \frac{T}{\sum_{t=1}^{\overline{T}} (\E[\mu_{I_t}])^{-q}}, \quad
y = \frac{T}{\sum_{t=\overline{T}+1}^{T} (\E[\mu_{I_t}])^{-q}}~,
\]
to obtain $\mathrm{R}_T^q = \mu^* - \big(\frac{1}{1/x + 1/y}\big)^{1/q}$. Thus, our goal would be to get bounds on $1/x$ and $1/y$. To bound $1/x$, note that in Phase I, uniform exploration ensures $\E[\mu_{I_t}] \ge \mu^*/k$, yielding
$\frac{1}{x} \le \frac{\overline{T} k^q}{(\mu^*)^q T}$.

To bound $1/y$, first note that $[\ell]$ denotes the set of arms that are pulled at least once in Phase II and $m_i$ denotes the number of times arm $i \in [\ell]$ is pulled in Phase II. Then an application of Jensen's inequality gives $\sum_{t=\overline{T}+1}^{T} (\E[\mu_{I_t}])^{-q} = \E \left[\sum_{i=1}^\ell m_i \mu_i^{-q} \right]$. Further applying Lemma~\ref{lem:suboptimal_arms:anytime} to each arm $i \in [\ell]$, we get $\frac{1}{y} \le \frac{\E\big[\sum_{i=1}^\ell m_i (\mu^*-u_i)^{-q} \big]}{T}$, where $u_i = 4 \sqrt{\frac{2\sigma^2 \log T}{T_i-1} }$.

Next, we control the terms $(\mu^* - u_i)^{-q}$. Note that for sufficiently small $u_i/\mu^*$, one can linearize $(1-u_i/\mu^*)^{-q}$ as $ 1 + \frac{2qu_i}{\mu^*}$. With the choice of $\mu^*$ and the lower bound $T_i-1 \geq \frac{128 p_a^2\sigma^2\log T}{ (\mu^*)^2}$ from Lemma~\ref{tau:anytime}, we argue that $u_i/\mu^*$ is sufficiently small to apply the above linearization. Thus, the inverse terms of the form $(\mu^*-u_i)^{-q}$ in the $q$-regret can be bounded linearly in $q u_i/\mu^*$, allowing us to control the contributions of suboptimal arms $i \in [l]$ in the $p$-mean regret. We use two slightly different variations of the above linearization for the two regimes $p\leq-1$ and $-1<p<0$ (see Claims \ref{lem:binomial_second} and \ref{lem:binomial_third} in the Appendix); however, the final result remains the same for both cases.


The reason behind linearizing $(\mu^* - u_i)^{-q}$ is that after some simplification, we can bound each of the terms $ m_i (\mu^*-u_i)^{-q}$ by $(\mu^*)^{-q} \left(m_i +\frac{4q\sqrt{m_i}}{\mu^*}\sqrt{{2\sigma^2 \log T}}\right)$. The $\sqrt{m_i}$ terms help us apply Cauchy-Schwarz inequality to handle $\sum_{i \in [\ell]} \sqrt{m_i}$, since $ \sum_{i \in [\ell]}m_i\leq T-\overline{T}\leq T$, which helps us deal only with $T$. Using these steps and making further simplifications, we get 
\begin{align*}
\frac{1}{y} \leq \frac{1}{(\mu^*)^q\Big(1-\frac{4q\sqrt{2k\sigma^2 \log T}}{\mu^* \sqrt{T}}\Big)}~.
\end{align*}
Substituting the bounds on $1/x$ and $1/y$ in the denominator for the $q-$regret, and simplifying this further using the standard numeric inequality $(1+a)^{-1} \geq 1-a$ for $a \in [0,1)$, we get
\begin{align*}
    &\frac{1}{\frac{1}{x}+\frac{1}{y}}\geq (\mu^*)^q \bigg(1-\frac{4q\sqrt{2k\sigma^2\log T}}{\mu^*\sqrt T}-\frac{\overline{T}k^q}{T}\bigg) \geq (\mu^*)^q \bigg(1-\frac{4q\sqrt{2k\sigma^2\log T}}{\mu^*\sqrt T}-\frac{512q^2k^{q+1}\sigma^2 \log T}{(\mu^*)^2 T}\bigg)~,
\end{align*}
where the last inequality follows from
Lemma~\ref{tau:anytime} since $\overline{T} \le 128kS = \frac{512 k q^2 \sigma^2 \log T}{(\mu^*)^2}$. Defining $v$ as the sum of the two negative terms inside the parentheses, we can show that $v \le 1/2$ in the working regime of $\mu^*$. Then, exponentiating both sides by $\frac{1}{q}$ and applying the binomial approximation $(1-v)^{1/q} \ge 1 - \frac{2v}{q}$ for $v \in [0,1/2]$, we obtain 
\begin{align*}
\Big(\!\frac{1}{\frac{1}{x} + \frac{1}{y}}\!\Big)^{\tfrac{1}{q}}\!\geq  \mu^*\!-\frac{8\sqrt{2k\sigma^2\log T}}{\ \sqrt T} -\frac{1024qk^{q+1}\sigma^2 \log T}{\mu^* T}~.
\end{align*}
Substituting the assumed regime for $\mu^*$, we get the desired terms of the form $\frac{k^{\frac{q+1}{2}}}{\sqrt{T}}$. The final step is to use this find upper bound the regret as $\mathrm{R}_T^q = \mu^* - \left(\frac{1}{\frac{1}{x} + \frac{1}{y}}\right)^{\frac{1}{q}}$, which gives us $\mathrm{R}_T^q \le O\Big(\frac{\sigma  k^{(|p|+1)/2} \sqrt{\log T}}{\sqrt{T}}\Big)$. Combining the two results from the two cases on $\mu^*$, we get the stated bound on the $p$-mean regret. Please refer to Appendix \ref{app:pmeanproof} for the complete proof.
\begin{figure*}[t]\centering
		\begin{subfigure}[b]{0.32\textwidth}
			\includegraphics[scale=0.28]{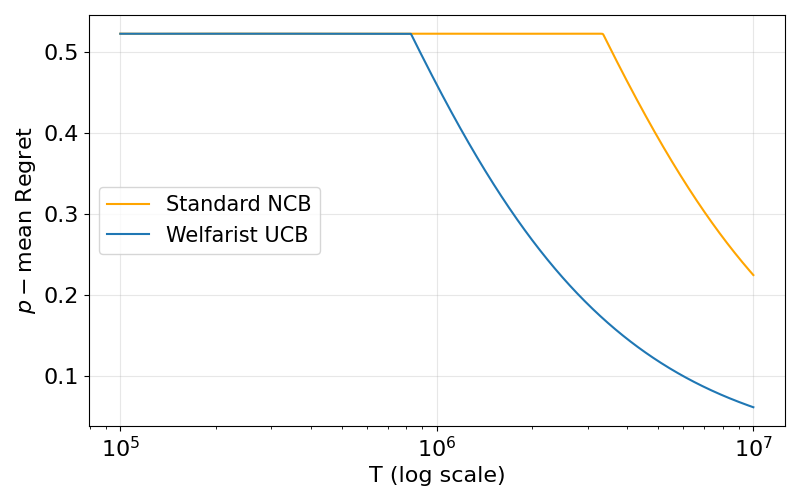}
			\caption{\centering Standard NCB vs Welfarist UCB, Bernoulli arms; $p=0$}
		\end{subfigure} 
		\begin{subfigure}[b]{0.32\textwidth}
			\includegraphics[scale=0.28]{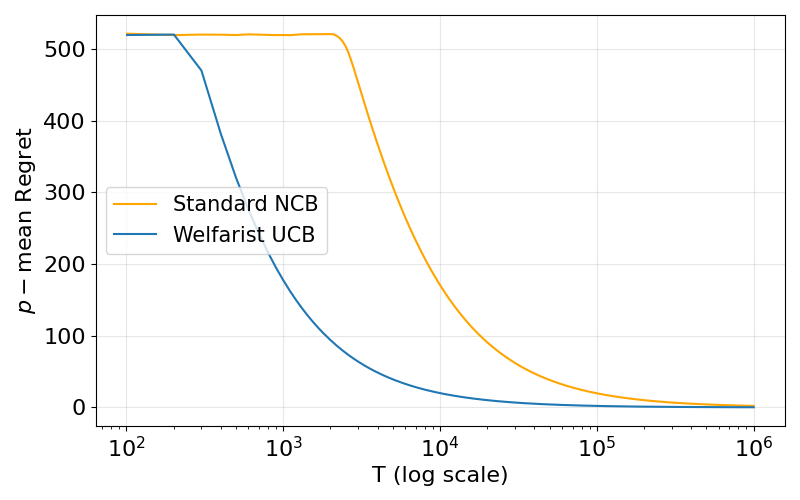}
			\caption{\centering Standard NCB vs Welfarist UCB, sub-Gaussian arms; $p=0$}
		\end{subfigure} 
		\begin{subfigure}[b]{0.32\textwidth}
			\includegraphics[scale=0.28]{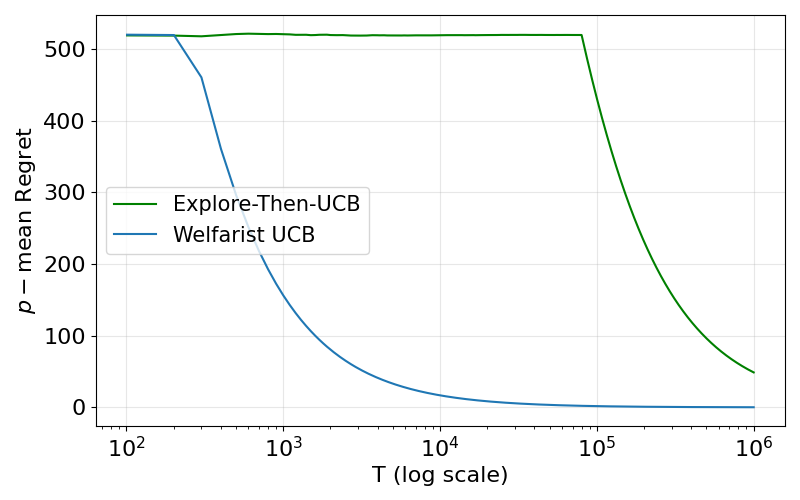}
			\caption{\centering Explore-then-UCB vs Welfarist UCB, sub-Gaussian arms; $p=0.5$}
		\end{subfigure} 
		\begin{subfigure}[b]{0.32\textwidth}
			\includegraphics[scale=0.28]{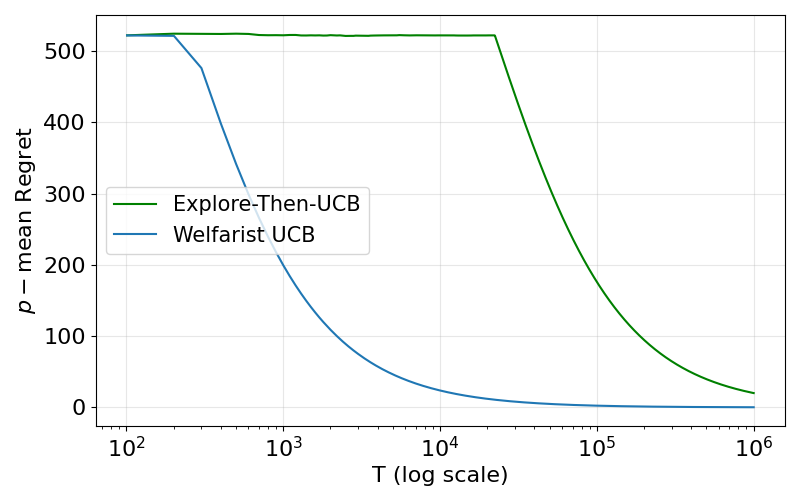}
			\caption{\centering  Explore-then-UCB vs Welfarist UCB, sub-Gaussian arms; $p=-0.5$}
		\end{subfigure} 
		\begin{subfigure}[b]{0.32\textwidth}
			\includegraphics[scale=0.28]{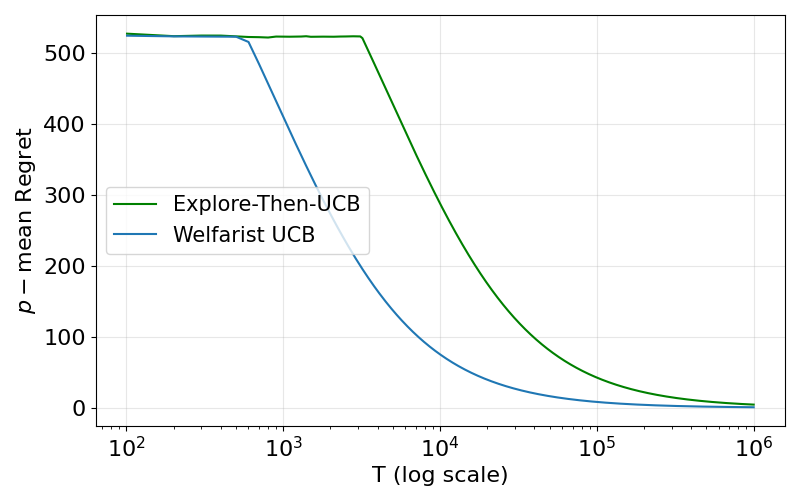}
			\caption{\centering  Explore-then-UCB vs Welfarist UCB, sub-Gaussian arms; $p=-1.5$}
		\end{subfigure} 
		\begin{subfigure}[b]{0.32\textwidth}
			\includegraphics[scale=0.28]{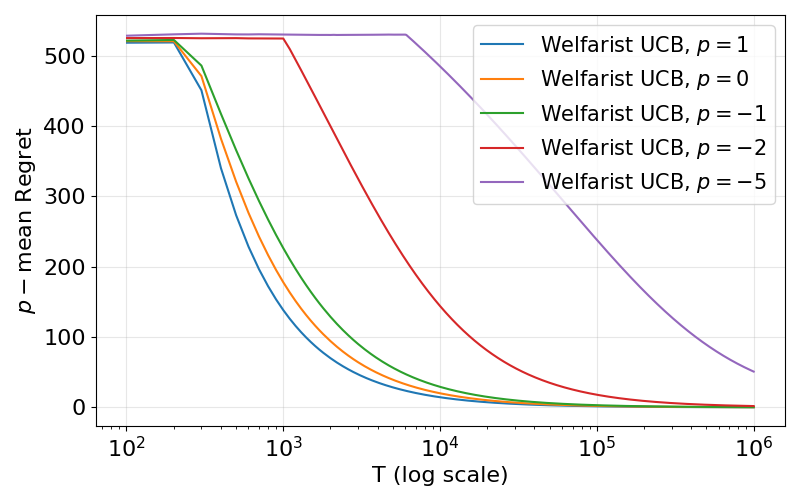}
            \caption{Welfarist UCB, \protect\\ vary $p$}

		\end{subfigure} 
    \caption{Numerical results for Welfarist UCB. (a) illustrates the comparision between the Nash regret ($p=0$) from Welfarist UCB and Standard NCB for Bernoulli rewards. (b) shows that Welfarist UCB achieves Standard NCB under sub-gaussian rewards. (c), (d) and (e) show that Welfarist UCB outperforms Explore-Then-UCB for the general $p-$mean regret in all three regimes. (f) shows the ablation on $p$; as $p$ decreases, we get more fairness, but the regret also increases.}
\end{figure*}

\section{Experiments}
\label{sec:experiments}
In this section, we present the results from our numerical simulations. 
All experiments report the average reward over 50 runs to estimate $\mathbb{E}[\mu_{I_t}]$. We compare our algorithm (Welfarist-UCB) with $\NCB$~\citep{barman2023fairness} and Explore-then-UCB~\citep{krishna2025p}. 

We first consider $k=50$ Bernoulli arms with means sampled uniformly at random from the interval $[0.005,1]$, and compare the Nash regret ($p=0$) of $\NCB$ and Welfarist-UCB. As shown in Figure~(a), the Nash regret of Welfarist-UCB decreases much faster than that of $\NCB$ as the horizon $T$ increases.

Next, we consider $k=50$ Gaussian arms with means sampled uniformly at random from the interval $[10,1000]$, and a fixed standard deviation of $\sigma=20$. As shown in Figure~(b), Welfarist-UCB minimizes the Nash regret significantly faster than $\NCB$.

Then, we compare the $p$-mean regret of Welfarist-UCB and Explore-then-UCB under the Gaussian bandit setting. We evaluate the algorithms in three regimes by selecting $p=0.5$ ($0<p \leq 1$), $p=-0.5$ ($-1<p<0$), and $p=-1.5$ ($p \leq -1$). Figures~(c), (d), and (e) present the results for these three cases, respectively, and illustrate that Welfarist-UCB consistently minimizes the $p$-mean regret faster than Explore-then-UCB across all regimes.

Finally, we conduct an ablation study on the fairness parameter $p$, evaluating the $p$-mean regret of our algorithm across different regimes. As shown in Figure~(f), the $p$-mean regret consistently increases as 
$p$ decreases, indicating that stronger fairness requirements come at the expense of higher regret. This trend corroborates the “no-free-lunch” hypothesis stated in Remark~\ref{rem:freeLunch}.

\section{Conclusion}
We show that the classic UCB algorithm, when combined with a data-adaptive exploration phase, yields a near-optimal solution for fairness-aware regret in stochastic bandits. Our Welfarist-UCB algorithm extends existing guarantees from bounded, non-negative rewards to general sub-Gaussian reward distributions, achieving near-optimal regret under both Nash and $p$-mean welfare metrics. This makes fair sequential decision-making more practical and widely applicable. Moreover, our results highlight the versatility of UCB for optimizing social welfare objectives, while also uncovering a “no-free-lunch” principle: enforcing stricter fairness criteria fundamentally increases the difficulty of the learning problem, thereby requiring more samples to achieve low regret. An important direction for future work is to formalize this principle by establishing matching lower bounds.


\appendix

\newpage
{\centering \LARGE \textbf{Appendix}}\\\hfill\\
In Appendix \ref{sec:related_work}, we discuss some of the existing literature related to multi-arm bandits and differential privacy. We
provide proofs for the results related to Nash regret in Appendix \ref{appendix:modifiedncb-supporting-lemmas}, whereas proofs for $p-$mean regret analysis are provided in \ref{app:pmeanproof}. Finally, proofs for some auxillary lemmas are provided in Appendix \ref{sec:auxLem}.

\section{Related Works}
\label{sec:related_work}
The incorporation of fairness considerations into MAB problems has garnered significant attention in recent years, driven by the increasing deployment of learning algorithms in domains with far-reaching social implications.


\paragraph{Fairness in Multi-Armed Bandits}
In the study of Multi-Armed Bandits (MABs), fairness has been conceptualized in several distinct ways. One prominent approach, explored by \citet{joseph2016fairness}, \citet{celis2019controlling}, and \citet{patil2021achieving}, centers on guaranteeing equitable treatment for the arms themselves. Another direction, particularly in multi-agent contexts, has focused on the fair allocation of rewards among different agents from each arm pull, as investigated by \citet{hossain2021fair} and \citet{jones2023efficient}. Our research diverges from these perspectives by addressing fairness across time, where each sequential round is uniquely treated as a distinct agent requiring fair consideration.


A foundational work by \citet{barman2023fairness} established the notion of Nash regret and proposed the Nash Confidence Bound algorithm to minimize it in stochastic multi-armed bandit environments. Their algorithm achieves tight regret guarantees that hold for both known and unknown ($T$-oblivious) time horizons. However, this approach requires a specialized algorithm and relies on strong assumptions that limit its applicability. Specifically, its analysis is based on multiplicative concentration inequalities, which restricts rewards to be \textbf{bounded and non-negative}, making the method unsuitable for general distributions like Gaussian rewards. Additionally, their algorithm implicitly requires prior knowledge of an upper bound on the optimal mean reward, $\mu^{*}$.

\citet{krishna2025p} later extended this by generalizing the objective to $p$-mean regret, which is derived from the $p$-mean welfare function in social choice theory. They proposed using a standard Explore-then-UCB algorithm. However, to circumvent the known failure of UCB for Nash regret shown by \citet{barman2023fairness}, their analysis relies on a restrictive assumption that all arms possess a minimum expected reward of at least $\tilde{\Omega}(\sqrt{k}T^{-1/4})$. This assumption significantly limits the applicability of their method, as it excludes common bandit instances. The justification for this assumption has also been noted as unsatisfactory, as it conflates different rates of convergence. Furthermore, their approach is confined to bounded, non-negative rewards and achieves sub-optimal regret bounds in certain fairness regimes, such as $\tilde{O}(k^{3/4}T^{-1/4})$ for $p \in [-1, 0)$

\paragraph{p-Mean Welfare and Fair Division}
The concept of $p$-mean welfare is well-established within the field of fair division, which integrates principles from both mathematical economics and computer science. Drawing from social choice theory~\cite{moulin2004fair}, this welfare function serves as a tunable framework for navigating the trade-off between equity and efficiency, a topic explored in various works~\cite{barman2020tight, garg2021tractable, barman2022universal,eckart2024fairness}. It is axiomatically defined by five fundamental properties—anonymity, scale invariance, continuity, monotonicity, and symmetry—which together ensure its alignment with core principles of fair allocation. Furthermore, it adheres to the Pigou-Dalton principle, which states that welfare increases when a resource is transferred from a more advantaged individual to a less advantaged one; this principle constrains the parameter $p$ to values no greater than 1. By building on this robust theoretical grounding, our work avoids the need to formulate arbitrary or ad-hoc fairness rules.

\paragraph{Other Related Work}
The pursuit of fairness is expanding into related domains, underscoring a broader trend of integrating such considerations into machine learning algorithms. For instance, \citet{sawarni2024nash} investigated Nash regret within the context of stochastic linear bandits, for which they established tight upper bounds under the condition of sub-Poisson rewards. 

The research by \citet{zhang2024no} on online Nash social welfare (NSW) maximization provides another relevant perspective. While their approach differs from ours—focusing on decisions that simultaneously affect multiple agents rather than our sequential, round-by-round fairness model—it similarly emphasizes the critical need to embed fairness into online decision-making frameworks.

Further illustrating this trend, \citet{mandal2022socially} apply an axiomatic framework to multi-agent reinforcement learning, showing that Nash Social Welfare uniquely satisfies specific fairness criteria and deriving regret bounds for policies that optimize for fair outcomes. Collectively, such studies demonstrate the increasing integration of fairness metrics like NSW into a wide array of learning algorithms, spanning from bandit problems to more complex Markov decision process environments.

Recent work has begun to address the intersection of privacy and fairness in sequential decision-making. \cite{sarkar2025dp} bridge the gap between privacy-preserving and fairness-aware bandits by introducing the Differentially Private Nash Confidence Bound (DP-NCB) framework. Their work provides a unified approach to simultaneously guarantee $\epsilon$-differential privacy while minimizing Nash regret. The proposed algorithms operate under both global (GDP) and local (LDP) privacy models and are designed to be anytime, requiring no knowledge of the time horizon. For the GDP setting, they achieve an order-optimal Nash regret of $\tilde{O}(\sqrt{\frac{k}{T}} + \frac{k}{\epsilon T})$, and in the more restrictive LDP setting, they achieve a regret of $\tilde{O}(\sqrt{\frac{k}{T}} + \frac{1}{\epsilon}\sqrt{\frac{k}{T}})$, with both bounds matching known lower bounds up to logarithmic factors.
\section{Missing Proofs}

\subsection{Proof for Bound on Probability of Good Event $\cE$}
\label{appendix:proof-of-E}
To prove this result, we will instead bound $\cE^c$. Invoking Lemma \ref{lem:hoeffding} with $\epsilon = 2\sqrt{\frac{2\sigma^2\log T}{n_i}}$, we have, for every arm $i$,
\begin{align*}
    \prob \left\{ |\hat{\mu_i}-\mu_i| \geq   2\sqrt{\frac{2\sigma^2\log T}{n_i}} \right\} & \leq 2\exp \left( -\frac{8 n_i \sigma^2\log T}{2n_i\sigma^2} \right)=\frac{2}{T^{4}}
\end{align*}
Thus, by union bound, we get
\begin{align}
    \prob \left\{ \cE_2^c \right\} = \frac{2}{T^{4}}\cdot kT \leq \frac{2}{T} \label {ineq:E_2c}
\end{align}
Finally, we have
\begin{align*}
    \prob \{\cE\} = 1-\prob\{\cE^c\} \geq 1-\frac{2}{T}~.
\end{align*}

\subsection{Proof of Supporting Lemmas}
\label{appendix:modifiedncb-supporting-lemmas}

First, we state the following two lemmas, which help us derive the terminating condition and prove Lemma \ref{tau:anytime}.

\begin{restatable}{lemma}{LemmaTauLower}
\label{tau:lower}
 Under the event $\cE$, for any arm $i$ and its sample count $n_i \leq 32  \Sample$, we have $n_i \leq 128p_a^2\sigma^2\frac{\log T}{\left(\widehat{\mu}_{i, n} - 2\sqrt{\frac{2\sigma^2\log T}{n_{i}}}\right)^2}$. 
\end{restatable}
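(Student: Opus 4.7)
The plan is to derive the bound via a direct comparison argument: under the good event $\cE$, the quantity $\widehat{\mu}_i - 2\sqrt{\tfrac{2\sigma^2 \log T}{n_i}}$ is upper-bounded by $\mu^*$, and the hypothesis $n_i \leq 32 \Sample$ is exactly $n_i \leq \tfrac{128 p_a^2 \sigma^2 \log T}{(\mu^*)^2}$ by definition of $\Sample$. Chaining these two facts will yield the claim in essentially one algebraic step.

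Concretely, I would first invoke the defining inequality of $\cE$, namely $|\widehat{\mu}_i - \mu_i| \leq 2\sqrt{\tfrac{2\sigma^2 \log T}{n_i}}$ at every round for every arm, to obtain the one-sided estimate $\widehat{\mu}_i - 2\sqrt{\tfrac{2\sigma^2 \log T}{n_i}} \leq \mu_i$, and then combine with $\mu_i \leq \mu^*$ to produce
\[
\widehat{\mu}_i - 2\sqrt{\tfrac{2\sigma^2 \log T}{n_i}} \;\leq\; \mu^*.
\]

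Next, I would restrict attention to the non-trivial regime where the left-hand side is strictly positive, since otherwise the right-hand side of the target bound is either undefined or vacuously infinite. In this regime both sides of the previous display are non-negative, so squaring preserves the direction to yield $\bigl(\widehat{\mu}_i - 2\sqrt{\tfrac{2\sigma^2 \log T}{n_i}}\bigr)^2 \leq (\mu^*)^2$. Inverting and multiplying by $128 p_a^2 \sigma^2 \log T$ gives $\tfrac{128 p_a^2 \sigma^2 \log T}{(\mu^*)^2} \leq \tfrac{128 p_a^2 \sigma^2 \log T}{(\widehat{\mu}_i - 2\sqrt{\tfrac{2\sigma^2 \log T}{n_i}})^2}$, and since $32 \Sample = \tfrac{128 p_a^2 \sigma^2 \log T}{(\mu^*)^2}$ by definition of $\Sample$, the hypothesis $n_i \leq 32 \Sample$ chains with this inequality to deliver the bound.

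This is essentially a one-line algebraic consequence of additive Hoeffding concentration and the optimality of $\mu^*$; the only mildly delicate point, which I do not expect to pose a real obstacle, is the sign check needed to legitimise the squaring step. In the broader argument for Lemma \ref{tau:anytime}, this lemma will be used to guarantee that on $\cE$ the Phase I termination criterion (which uses the looser constant $192 p^2$ rather than $128 p_a^2$) fails for every arm whose pull count is at most $32 \Sample$, precluding termination before roughly $32 k \Sample$ rounds.
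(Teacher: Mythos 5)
Your proof is correct, and it reaches the conclusion by a noticeably more direct route than the paper. Both arguments ultimately rest on the same two ingredients — the one-sided consequence of $\cE$, namely $\widehat{\mu}_i - 2\sqrt{\tfrac{2\sigma^2\log T}{n_i}} \le \mu_i \le \mu^*$, and the identity $32\Sample = \tfrac{128 p_a^2\sigma^2\log T}{(\mu^*)^2}$ — but the paper takes a longer path: it bounds the cumulative reward $n\,\widehat{\mu}_{i,n}$ through a four-step chain (adding and later cancelling a $2\sqrt{2n\sigma^2\log T}$ term), then divides through by $\widehat{\mu}_{i,n} - 2\sqrt{\tfrac{2\sigma^2\log T}{n}}$ to isolate $n$. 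Your version collapses this to a single squaring of $\widehat{\mu}_i - 2\sqrt{\tfrac{2\sigma^2\log T}{n_i}} \le \mu^*$ followed by inverting, which is cleaner and loses nothing. You also handle a point the paper leaves implicit: both proofs need $\widehat{\mu}_i - 2\sqrt{\tfrac{2\sigma^2\log T}{n_i}} > 0$ (the paper's step $\tfrac{1}{\mu^*} \le \tfrac{1}{\widehat{\mu}_{i,n} - 2\sqrt{2\sigma^2\log T/n}}$ silently assumes it), and your explicit restriction to that regime is justified since the algorithm's termination test only ever evaluates this quantity when the first disjunct $\widehat{\mu}_i \le 2\sqrt{\tfrac{2\sigma^2\log T}{n_i}}$ has already failed. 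Your closing remark about how the lemma feeds into Lemma \ref{tau:anytime} (the $192p^2$ threshold cannot be exceeded by any arm while $n_i \le 32\Sample$) also matches the paper's use of it.
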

\begin{proof}
Write $N \coloneqq 32 \Sample$. Note that, for any arm $i$, the quantity $n \  \widehat{\mu}_{i,n} $ is equal to the sum of the rewards observed for arm $i$ in the first $n$ samples. Therefore, for all $n \leq N$, we have 
\begin{align}
n \  \widehat{\mu}_{i,n} &\leq n \ \big(\mu_i + 2\sqrt{\frac{2\sigma^2\log T}{n}}\big) \tag{via event $\cE$}\\
    &= n\mu_i + 2\sqrt{2n\sigma^2\log T} \leq 128p_a^2 \sigma^2\frac{\mu_i\log T}{(\mu^*)^2} + 2\sqrt{2n\sigma^2\log T} \tag{$n \leq 32S$}\\ 
    & \leq 128p_a^2\sigma^2 \frac{\log T}{\mu^*} + 2\sqrt{2n\sigma^2\log T}\tag{$\mu_i\leq\mu^*$} \\ 
    &\leq 128p_a^2\sigma^2\frac{\log T}{\widehat{\mu}_{i, n} - 2\sqrt{\frac{2\sigma^2\log T}{n}}} + 2\sqrt{2n\sigma^2\log T}~,
\nonumber\\
\implies& n\left(\widehat{\mu}_{i,n}-2\sqrt{\frac{2\sigma^2\log T}{n}}\right)\leq 128p_a^2\sigma^2\frac{\log T}{\big(\widehat{\mu}_{i,n}-2\sqrt{\frac{2\sigma^2\log T}{n}}\big)}\nonumber\\
\implies n&\leq 128p_a^2\sigma^2\frac{\log T}{\big(\widehat{\mu}_{i,n}-2\sqrt{\frac{2\sigma^2\log T}{n}}\big)^2}
\end{align}
which completes the proof.
\end{proof}

\begin{restatable}{lemma}{LemmaTauUpper}
\label{tau:upper}
Under the event $\cE$, for the optimal arm $i^*$ and its sample count $n_{i^*} \geq 128  \Sample$, we have $n_{i^*} \geq 256p_a^2\sigma^2\frac{\log T}{\left(\widehat{\mu}_{i^*, n} - 2\sqrt{\frac{2\sigma^2\log T}{n_{i^*}}}\right)^2}$.
\end{restatable}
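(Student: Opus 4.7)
The plan is to leverage the good event $\cE$ to convert $\widehat{\mu}_{i^*,n}$ into a lower bound involving $\mu^*$, and then use the assumption $n_{i^*}\ge 128\,\Sample$ to absorb the stochastic error term into $\mu^*$ itself. Concretely, the target inequality is equivalent (after rearrangement, assuming positivity of the bracket) to showing
\begin{equation*}
\widehat{\mu}_{i^*,n}-2\sqrt{\tfrac{2\sigma^2\log T}{n_{i^*}}}\;\ge\;16\,p_a\sqrt{\tfrac{\sigma^2\log T}{n_{i^*}}}.
\end{equation*}
So the work reduces to proving this one-line inequality.

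First, under $\cE$ we have $\widehat{\mu}_{i^*,n}\ge \mu^*-2\sqrt{\tfrac{2\sigma^2\log T}{n_{i^*}}}$, hence
\begin{equation*}
\widehat{\mu}_{i^*,n}-2\sqrt{\tfrac{2\sigma^2\log T}{n_{i^*}}}\;\ge\;\mu^*-4\sqrt{\tfrac{2\sigma^2\log T}{n_{i^*}}}.
\end{equation*}
Second, the hypothesis $n_{i^*}\ge 128\,\Sample=\tfrac{512\,p_a^2\,\sigma^2\log T}{(\mu^*)^2}$ rearranges to $\mu^*\ge 16\sqrt{2}\,p_a\sqrt{\tfrac{\sigma^2\log T}{n_{i^*}}}$. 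Substituting this into the previous display gives
\begin{equation*}
\widehat{\mu}_{i^*,n}-2\sqrt{\tfrac{2\sigma^2\log T}{n_{i^*}}}\;\ge\;\bigl(16\sqrt{2}\,p_a-4\sqrt{2}\bigr)\sqrt{\tfrac{\sigma^2\log T}{n_{i^*}}}.
\end{equation*}

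Finally, I just need to check that $16\sqrt{2}\,p_a-4\sqrt{2}\ge 16\,p_a$ whenever $p_a\ge 1$ (which holds by the definition $p_a\in\{1,p\}$ with $|p|\ge 1$ in the relevant branch). This is a one-line numeric verification: the condition simplifies to $p_a\ge \tfrac{4\sqrt{2}}{16(\sqrt{2}-1)}=\tfrac{2+\sqrt{2}}{4}\approx 0.854$, comfortably satisfied for $p_a\ge 1$. Squaring both sides of the resulting inequality then yields the claim.

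I do not anticipate a real obstacle here; the proof is essentially a constant-chasing exercise dual to \cref{tau:lower}. The only thing to be careful about is ensuring the bracket $\widehat{\mu}_{i^*,n}-2\sqrt{2\sigma^2\log T/n_{i^*}}$ is non-negative before squaring, which is automatic from the chain above since its lower bound $(16\sqrt{2}p_a-4\sqrt{2})\sqrt{\sigma^2\log T/n_{i^*}}$ is strictly positive for $p_a\ge 1$.
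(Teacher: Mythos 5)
Your proof is correct and rests on the same two ingredients as the paper's: the concentration guarantee from $\cE$ and the hypothesis $n_{i^*}\ge 128\,\Sample$, combined by constant chasing. The packaging differs in one way worth noting. The paper first reduces to the single sample count $M=128\,\Sample$ by asserting that the cumulative reward $n\,\widehat{\mu}_{i^*,n}$ is non-decreasing in $n$ (a step that is only obviously valid for non-negative realized rewards, not for general sub-Gaussian ones), and then verifies the claim at $n=M$ in additive form, showing the total reward $M\widehat{\mu}_{i^*,M}\ge \tfrac{448\,p_a^2\sigma^2\log T}{\mu^*}$ dominates the sum of the two required terms. You instead take square roots and verify the inequality directly at the actual count $n_{i^*}$, which bypasses the monotonicity step entirely and is, if anything, cleaner. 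One small correction: in the branch $p<-1$ the paper sets $p_a=p<0$, so your displayed bounds should carry $16\,|p_a|$ and $16\sqrt{2}\,|p_a|$ rather than $16\,p_a$ and $16\sqrt{2}\,p_a$ (only $p_a^2$ appears in the lemma, so the relevant square root is $|p_a|$); with that substitution your final numeric check $16\sqrt{2}\,|p_a|-4\sqrt{2}\ge 16\,|p_a|$ for $|p_a|\ge 1$ is exactly right and the argument closes.
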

\begin{proof}
Write $M \coloneqq 128 \Sample$ and note that, for all $n \geq M$, we have $n \ \widehat{\mu}_{i,n} \geq M \ \widehat{\mu}_{i,M}$. Thus, proving this lemma for $M=128S$ is sufficient. Observe that 
\begin{align*}
\widehat{\mu}_{i^*,M} & \geq \mu^*-2\sqrt{\frac{2\sigma^2\log T}{M}} \tag{via event $\cE$} \\
& =\mu^*-\frac{\mu^*}{\sqrt{64p_a^2}}      \tag{since $M=128 \Sample = \frac{512p_a^2\sigma^2 \log T}{(\mu^*)^2}$}\\
& = \mu^*\left(1-\frac{1}{8|p_a|}\right) \geq \frac{7}{8}\mu^* \tag{$|p_a| \geq 1$}
\end{align*}
Thus, the total observed reward satisfies:
\begin{align}
    M \ \widehat{\mu}^*_{i,M} \geq \frac{7}{8}  \mu^* \ 128 \Sample=\frac{448 p_a^2 \sigma^2 \log T}{\mu^*} \label{ineq:rewardIneq}
\end{align}
Now, consider the following terms:
\begin{align}
256 p_a^2 \sigma^2\frac{\log T}{\widehat{\mu}_{i^*, M} - 2\sqrt{\frac{2\sigma^2\log T}{M}}} \leq 256p_a^2\sigma^2 \frac{\log T}{\frac{6}{8} \mu^*} =384p_a^2\sigma^2 \frac{\log T}{ \mu^*}\label{ineq:lemFirst}
\end{align}
and,
\begin{align}
    2\sqrt{2\sigma^2M\log T} = \frac{64\sigma^2\log T}{\mu^*} \leq  \frac{64p_a^2\sigma^2\log T}{\mu^*} \tag{$|p_a| \geq 1$}\label{eq:lemSecond}
\end{align}
Adding inequality \eqref{ineq:lemFirst} and the above inequality, we have
\begin{align}
   & 256 p_a^2 \frac{\sigma^2\log T}{\widehat{\mu}_{i^*, M} - 2\sqrt{\frac{2\sigma^2\log T}{M}}} + 2\sqrt{2\sigma^2M\log T} \leq \frac{448p_a^2\sigma^2\log T}{\mu^*} \leq M \ \widehat{\mu}^*_{i,M} \tag{via inequality \eqref{ineq:rewardIneq}}\\
    &\qquad\qquad\qquad\implies M \geq 256p_a^2\sigma^2\frac{\log T}{\left(\widehat{\mu}_{i^*, M} - 2\sqrt{\frac{2\sigma^2\log T}{M}}\right)^2}
\end{align}
This completes the proof of the lemma.
\end{proof}
Next, we restate and prove Lemma \ref{tau:anytime}.

\LemmaTauAnytime*
\begin{proof}
Suppose Phase I terminates after $t = 32kS$ rounds. Then each arm has been pulled $32S$ times. Thus, via Lemma \ref{tau:lower}, for every arm $i$, the number of pulls $n_i$ is less than $128p_a^2\sigma^2\frac{\log T}{\left(\widehat{\mu}_{i, n} - 2\sqrt{\frac{2\sigma^2\log T}{n_{i}}}\right)^2} \leq 192p_a^2\sigma^2\frac{\log T}{\left(\widehat{\mu}_{i, n} - 2\sqrt{\frac{2\sigma^2\log T}{n_{i}}}\right)^2}$. Therefore, $\tau \geq 32 k  \Sample$.

Similarly, suppose Phase I terminates after $t = 128kS$ rounds. Then each arm has been pulled $128S$ times. Therefore, Lemma \ref{tau:upper} implies that, by round $t_2$ and for the optimal arm $i^*$, the number of pulls $n_{i^*}$  is at least $ 256p_a^2\sigma^2\frac{\log T}{\left(\widehat{\mu}_{i^*, n} - 2\sqrt{\frac{2\sigma^2\log T}{n_{i^*}}}\right)^2}\geq 192p_a^2\sigma^2\frac{\log T}{\left(\widehat{\mu}_{i^*, n} - 2\sqrt{\frac{2\sigma^2\log T}{n_{i^*}}}\right)^2}$. Hence, $\tau \leq 128 kS$
\end{proof}


\begin{lemma}[Permutation sampling is a static coupling of sequential uniform sampling]
\label{lem:permute}
Let $\mathcal B$ be a multiset containing $N_i$ items of type $i$ for $i=1,\dots,K$, with $\sum_{i=1}^K N_i=N$.  
Consider two sampling schemes:
\begin{enumerate}
  \item[(A)] (Permutation): Draw a uniform random permutation of the $N$ (labelled) items and set $I_t$ to be the type observed at position $t$ (sampling \emph{without} replacement).
  \item[(B)] (Sequential with replacement): Draw each $I_t$ independently \emph{with} replacement, choosing type $i$ with probability $N_i/N$ at every draw.
\end{enumerate}
Then for every $t\in\{1,\dots,N\}$ and every type $i$, the one-step marginals coincide, i.e., 
\[
\Pr_A(I_t=i)=\Pr_B(I_t=i)=\frac{N_i}{N}~.
\]
\end{lemma}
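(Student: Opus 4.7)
The plan is to handle the two schemes separately, since (B) is immediate by definition and only (A) requires a short symmetry argument. For scheme (B), each $I_t$ is by construction an independent draw from the categorical distribution putting mass $N_i/N$ on type $i$, so $\Pr_B(I_t=i)=N_i/N$ holds trivially for every $t$.

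For scheme (A), the key is to work at the level of the $N$ \emph{labelled} items rather than their types. I would fix a labelling $1,\dots,N$ of the items in $\mathcal B$ and let $\sigma$ denote the uniform random permutation used to order them. By the exchangeability of a uniform permutation, for any fixed label $j\in\{1,\dots,N\}$ and any fixed position $t\in\{1,\dots,N\}$, the number of permutations placing item $j$ at position $t$ is $(N-1)!$, so $\Pr_A(\sigma(t)=j)=(N-1)!/N!=1/N$. Now partition the labels by type: let $L_i\subseteq\{1,\dots,N\}$ be the set of labels of items of type $i$, so $|L_i|=N_i$ and the $L_i$ are disjoint with union $\{1,\dots,N\}$. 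Then
\[
\Pr_A(I_t=i)=\Pr_A(\sigma(t)\in L_i)=\sum_{j\in L_i}\Pr_A(\sigma(t)=j)=\frac{N_i}{N}.
\]
Combining the two computations gives $\Pr_A(I_t=i)=\Pr_B(I_t=i)=N_i/N$, as claimed.

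There is no real obstacle here: the entire content of the lemma is the observation that a uniform permutation is exchangeable, so each position has the same marginal law as a size-one uniform draw from $\mathcal B$ (which coincides with one i.i.d.\ draw with replacement). The only thing to be careful about in the write-up is keeping the labelled-item view distinct from the type view, so that the count $(N-1)!$ is unambiguous and the sum over $L_i$ is justified; everything else is a one-line symmetry argument.
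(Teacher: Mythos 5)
Your proposal is correct and follows essentially the same route as the paper's proof: both reduce scheme (B) to its definition and handle scheme (A) by observing that, under a uniform permutation of the $N$ labelled items, each item is equally likely to occupy position $t$, so summing over the $N_i$ labels of type $i$ gives $N_i/N$. The only difference is that you make the symmetry step explicit via the $(N-1)!/N!$ count, which the paper leaves as "by symmetry."
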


\begin{proof}
Scheme (B) is immediate by definition: each draw chooses type $i$ with probability $N_i/N$, so $\Pr_B(I_t=i)=N_i/N$ for every $t$.

For scheme (A) consider the uniform permutation of the $N$ labelled items (items of the same type can be thought of as distinct labels that share the same type). By symmetry, every labelled item is equally likely to occupy position $t$ in the permutation. Since there are $N_i$ labelled items of type $i$ out of $N$ total labelled items, the probability that position $t$ holds an item of type $i$ is
\[
\Pr_A(I_t=i)=\frac{N_i}{N}.
\]
Thus $\Pr_A(I_t=i)=\Pr_B(I_t=i)=N_i/N$, as claimed.
\end{proof}


\begin{restatable}{lemma}{LemmaBoundNSW}
\label{lem:modified_ncb}
Consider a bandit instance with optimal mean $\mu^* \geq  \frac{40\sigma \sqrt{2k\log T\log k}}{\sqrt T}$. Then, we have 
\begin{align*}
\left(\prod_{t=1}^{T} \E \left[ \mu_{I_t} \right] \right)^\frac{1}{T}  \geq \mu^*-256\sigma\sqrt{\frac{ k \log T \log k}{T} }-\frac{4\mu^*k\log T\log k}{T}-\frac{8\mu^*}{T}.
\end{align*}
\end{restatable}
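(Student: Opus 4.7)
The plan is to combine Lemmas \ref{lem:phase1} and \ref{lem:phase2}, which lower bound the geometric means over Phase I and Phase II respectively, and then simplify the resulting product using the Phase I length bound from Lemma \ref{tau:anytime} together with the hypothesis $\mu^* \geq 40\sigma\sqrt{2k\log T\log k/T}$.

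First I would split the product $\prod_{t=1}^T \E[\mu_{I_t}]$ into its Phase I and Phase II factors, take the $T$-th root, and apply the two phase-wise lemmas. The key algebraic observation is that the factors $(\mu^*)^{\overline T/T}$ and $(\mu^*)^{1-\overline T/T}$ telescope to $\mu^*$, yielding
\[
\left(\prod_{t=1}^T\E[\mu_{I_t}]\right)^{1/T} \geq \mu^*\left(1-\tfrac{\overline T\log k}{T}\right)\left(1-\tfrac{4}{\mu^*}\sqrt{\tfrac{2k\sigma^2\log T}{T}}\right).
\]
I would then verify (using the hypothesis on $\mu^*$ and the bound $\overline T \leq 128 kS$ from Lemma \ref{tau:anytime}) that both bracketed factors lie in $[0,1]$, and use the elementary inequality $(1-a)(1-b) \geq 1 - a - b$ to pass to the additive lower bound $\mu^* - \mu^*\overline T\log k/T - 4\sqrt{2k\sigma^2\log T/T}$.

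Next comes the simplification of the two error terms. The second term $4\sqrt{2k\sigma^2\log T/T}$ is already of order $\sigma\sqrt{k\log T/T}$ and is absorbed into the $256\sigma\sqrt{k\log T\log k/T}$ bucket. For the first term, I would substitute $\overline T \leq 128 kS = 512 k\sigma^2\log T/(\mu^*)^2$ (taking $p_a = 1$ since $p=0$ for Nash regret), which converts $\mu^*\overline T\log k/T$ into $512 k\sigma^2\log T\log k/(\mu^* T)$; invoking $(\mu^*)^2 \geq 3200\sigma^2 k\log T\log k/T$ from the hypothesis turns this into another $O(\sigma\sqrt{k\log T\log k/T})$ contribution, while the residual piece that survives an unsimplified bound of the Phase I length manifests as the stated $\frac{4\mu^* k\log T\log k}{T}$ term. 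Finally, because all of the above is derived under the good event $\cE$, I would pass back to unconditional expectations via $\E[\mu_{I_t}]\geq \E[\mu_{I_t}\mid\cE]\Pr(\cE)\geq(1-2/T)\E[\mu_{I_t}\mid\cE]$ (using $\mu_{I_t}\geq 0$), so that taking $T$-th roots produces a multiplicative $(1-2/T)$ that, when expanded, yields the $\tfrac{8\mu^*}{T}$ correction.

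The main obstacle I foresee is bookkeeping rather than insight: one must ensure that every factor treated as ``small'' in an inequality like $(1-a)(1-b) \geq 1-a-b$ or $k^{-\overline T/T}\geq 1-\overline T\log k/T$ actually lies in $[0,1]$ under the working regime, and simultaneously keep the three additive error terms (the dominant $\sigma\sqrt{k\log T\log k/T}$-scale term, the $\mu^* k\log T\log k/T$-scale Phase I residual, and the $\mu^*/T$-scale bad-event correction) cleanly separated so that no cross terms inflate the constants beyond what the statement allows.
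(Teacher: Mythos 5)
Your proposal follows essentially the same route as the paper's proof: split the Nash social welfare into Phase I and Phase II factors (the paper derives Lemmas \ref{lem:phase1} and \ref{lem:phase2} as intermediate steps inside this very proof), telescope the powers of $\mu^*$, pass to an additive bound via $(1-a)(1-b)\geq 1-a-b$, substitute $\overline{T}\leq 128kS$ together with the hypothesis on $\mu^*$, and account for $\Pr\{\cE\}\geq 1-2/T$ at the end. The bookkeeping you flag as the main risk is indeed where the paper's own algebra is loosest, but your plan handles each of the three error terms the same way the paper does, so the approach is correct.
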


\begin{proof}
Firstly, for {Nash regret}, the parameter $p$ is set to $0$. Consequently, we will take $p_a=1$ wherever necessary in the analysis. Next, we assume that {Phase 1} runs for at most $\overline{T} \leq 128 k \mathcal{S}$ rounds. The existence of this upper bound on $\overline{T}$ is guaranteed by {Lemma \ref{tau:anytime}}, which ensures that Algorithm \ref{algo:ucb:modified} will complete Phase 1 by the $\overline{T}$-th round; specifically, the termination condition of the first while-loop (Line \ref{step:PhaseOneAlgTwo}) in the algorithm must be satisfied by the $\overline{T}$-th round. Also, note that, the following inequality holds based on the assumption on $\mu^*$ stated in the lemma.
\begin{align}
\frac{\overline{T} \ \log \left(k \right)}{T} =  \frac{128 \ k \Sample \ \log \left( k\right) }{T} 
= \frac{512 \ k \sigma^2 \log T \ \log \left( k\right) }{(\mu^*)^2 T} \leq \frac{512}{3200}\leq 1 \label{ineq:overlineTexp}
\end{align} 
Next, we split the Nash social welfare into the following two terms, for the two phases respectively: 
\begin{align}
    \left(\prod_{t=1}^{w} \E \left[ \mu_{I_t} \right] \right)^\frac{1}{T} =  \left(\prod_{t=1}^{\overline{T}} \E \left[ \mu_{I_t} \right]\right)^\frac{1}{T} \left(\prod_{t=\overline{T} + 1}^{w}  \E \left[ \mu_{I_t} \right] \right)^\frac{1}{T} \label{eqn:splitW}
\end{align} 
We shall lower-bound these two products separately.

The term corresponding to Phase I, i.e. first term in the RHS of equation (\ref{eqn:splitW}) can be bounded as:
\begingroup
\allowdisplaybreaks
\begin{align}
    \left(\prod_{t=1}^{\overline{T}} \E [\mu_{I_t} ] \right)^\frac{1}{T} 
     & \geq \left(1-\frac{2}{T}\right)^{\frac{\overline{T}}{T}}\left(\frac{\mu^*}{k}\right)^{\frac{\overline{T}}{T}}      \nonumber       \geq \left(1-\frac{2}{T}\right) \left(\mu^*\right)^{\frac{\overline{T}}{T}} \left(\frac{1}{k}\right)^{\frac{\overline{T}}{T}} \nonumber                \\
     & = \left(1-\frac{2}{T}\right)\left(\mu^*\right)^{\frac{\overline{T}}{T}} \left(\frac{1}{2}\right)^{\frac{\overline{T}  \log (k)}{T}} \nonumber       = \left(1-\frac{2}{T}\right)\left(\mu^*\right)^{\frac{\overline{T}}{T}} \left(1- \frac{1}{2}\right)^{\frac{\overline{T}  \log (k)}{T}} \nonumber   \\
     & \geq  \left(\mu^*\right)^{\frac{\overline{T}}{T}} \left(1-{\frac{\overline{T}  \log (k)}{T}}\right)\left(1-\frac{2}{T}\right)    \label{ineq:phaseone:anytime}
\end{align}
\endgroup
To prove the last inequality, observe that the exponent ${\frac{\overline{T}  \log (k)}{T}} \leq 1$ (see inequality (\ref{ineq:overlineTexp})). Thus, we can apply Claim \ref{lem:binomial}.

Now, for the Phase II term, i.e. the second term in the RHS of equation (\ref{eqn:splitW}), we have the following
\begin{align}
    \left(\prod_{t=\overline{T}+ 1}^{T} \E\left[ \mu_{I_t} \right]\right)^\frac{1}{T}
     & \geq \E\left[\left( \prod_{t=\overline{T}+1}^{T} \mu_{I_t} \right)^\frac{1}{T}\right ] \tag{Multivariate Jensen's inequality} \nonumber               \\
     & \geq \E \left[\left( \prod_{t=\overline{T}+1}^{T} \mu_{I_t} \right)^\frac{1}{T} \;\middle|\; \mathcal{E} \right]  \prob\{ \mathcal{E} \} \label{ineq:interim:anytime}
\end{align}
Since, as mentioned, {Lemma \ref{tau:anytime}} guarantees that {Algorithm \ref{algo:ucb:modified}} completes {Phase 1} by the $\overline{T}$-th round, any round $t > \overline{T}$ necessarily falls under {Phase 2}.

To bound the expected value on the right-hand side of {Inequality (\ref{ineq:interim:anytime})}, we consider only the arms that are pulled at least once after the first $\overline{T}$ rounds (i.e., in Phase 2). With re-indexing, let $\{1,2, \ldots, \ell\}$ denote this set of arms. Let $m_i \geq 1$ be the number of times arm $i \in [\ell]$ is pulled in Phase 2, such that $\sum_{i=1}^\ell m_i = T - \overline{T}$.

Furthermore, let $T_i$ denote the total number of times arm $i$ is pulled throughout the algorithm. Note that $(T_i - m_i)$ is the number of times arm $i \in [\ell]$ is pulled during the first $\overline{T}$ rounds (Phase 1).

Using this notation, the expected value can be rewritten as:
$$
\E \left[\left( \prod_{t=\overline{T}+1}^{T} \ \mu_{I_t} \right)^\frac{1}{T} \;\middle|\; E \right] = \E\left[\left( \prod_{i=1}^{\ell} \mu_{i}^\frac{m_i}{T} \right)\;\middle|\; E \right]
$$
Moreover, since our analysis is conditioned on the good event $E$, we can apply {Lemma \ref{lem:suboptimal_arms:anytime}} to each arm $i \in [\ell]$. Hence,
\begin{align}
    \E \left[\left( \prod_{t=\overline{T}+1}^{T} \ \mu_{I_t} \right)^\frac{1}{T} \;\middle|\; \cE \right]  = \E\left[\left( \prod_{i=1}^{\ell} \mu_{i}^\frac{m_i}{T}  \right)\;\middle|\; \cE \right] \nonumber     
    & \geq \E\left[\prod_{i=1}^{\ell}\left(\mu^* - 2\sqrt{\frac{ 2\sigma^2\log T}{T_i -1 }} \right)^\frac{m_i}{T} \;\middle|\; \cE \right]   \tag{Lemma \ref{lem:suboptimal_arms:anytime}} \nonumber\\
    &= (\mu^*)^{\frac{T-\overline{T}}{T}} \E\left[\prod_{i=1}^{\ell}\left(1 - 2\sqrt{\frac{ 2\sigma^2\log T}{(\mu^*)^2(T_i -1)}} \right)^\frac{m_i}{T}	\;\middle|\; \cE \right] \label{ineq:dunzo:anytime}
\end{align}
For the last equality, we use $\sum_{i=1}^\ell m_i = T - \overline{T}$. 
We now state a numeric inequality, which helps us simplify the analysis. The proof is deferred to the Auxillary Lemmas section.
\begin{restatable}{claim}{LemmaNumeric}
\label{lem:binomial}
For all reals $x \in \left[0, \frac{1}{2}\right]$ and all $a \geq0$, we have $(1-x)^{a} \geq  1- 2ax$.
\end{restatable}
Recall that each arm is pulled at least $32 S$ times during the first $\overline{T}$ rounds. Hence, $T_i > 32 S$, for each arm $i \in [\ell]$.
Since $S = \frac{4\sigma^2 \log T}{(\mu^*)^2}$, we have:
\begin{align*}
2\sqrt{\frac{2\sigma^2 \log T}{(\mu^*)^2(T_i - 1)}} \leq 2 \sqrt{\frac{2\sigma^2\log T}{(\mu^*)^2(32S)}} = 2 \sqrt{\frac{2\sigma^2\log T}{(\mu^*)^2 \cdot 32 \cdot \frac{4\sigma^2 \log T}{(\mu^*)^2}}} = 2 \sqrt{\frac{2\sigma^2\log T}{128\sigma^2\log T}} \leq \frac{1}{2} \quad \text{for each } i \in [\ell].
\end{align*}
Therefore, we can apply Claim \ref{lem:binomial} to reduce the expected value in inequality (\ref{ineq:dunzo:anytime}) as follows
\begin{align*}
    &\E \left[\prod_{i=1}^{\ell}\left(1 - \frac{2}{\mu^*}\sqrt{\frac{ 2\sigma^2\log T}{(T_i -1 )}} \right)^\frac{m_i}{T}\;\middle|\; \cE \right] \geq \E \left[\prod_{i=1}^{\ell}\left(1 - \frac{4 \ m_i}{T}\frac{1}{\mu^*}\sqrt{\frac{2\sigma^2 \log T}{(T_i -1)}} \right)\;\middle|\; \cE \right]\\
    & \geq \E \left[\prod_{i=1}^{\ell}\left(1 - \frac{4}{T\mu^*}\sqrt{ 2m_i\sigma^2 \log T} \right)\;\middle|\; \cE \right]     & \tag{since $T_i \geq m_i + 1$}
\end{align*}
We can further simplify the above inequality by noting that $(1-x)(1-y) \geq 1 - x- y$ for all $x,y \geq 0$.
\begingroup
\allowdisplaybreaks
\begin{align*}
    &\E\left[\prod_{i=1}^{\ell}\left(1 - \frac{4}{T\mu^*}\sqrt{2m_i \sigma^2\log T } \right)\;\middle|\; \cE \right]  \geq \E\left[1 -\sum_{i=1}^{\ell}\left(\frac{4}{T\mu^*}\sqrt{2m_i\sigma^2 \log T } \right)\;\middle|\; \cE \right]\\
    &= 1 -\left(\frac{4}{T\mu^*}\sqrt{ 2\sigma^2\log T} \right) \E\left[ \sum_{i=1}^{\ell} \sqrt{m_i}\;\middle|\; \cE \right]          \geq 1 -\left(\frac{4}{T\mu^*}\sqrt{2\sigma^2\log T } \right) \E\left[ \sqrt{\ell} \ \sqrt{\sum_{i=1}^\ell m_i} \;\middle|\; \cE \right] \tag{Cauchy-Schwarz inequality}\\
    &  \geq 1 -\left(\frac{4 }{T\mu^*}\sqrt{ 2\sigma^2\log T} \right) \E\left[ \sqrt{\ell \ T} \;\middle|\; \cE \right] \tag{since $\sum_i m_i \leq T$}                         = 1 -\left( \frac{4}{\mu^*} \sqrt{\frac{2\sigma^2 \log T }{ T }} \right) \E\left[ \sqrt{\ell} \;\middle|\; \cE \right] \\
    &\geq 1 - \left( \frac{4}{\mu^*}\sqrt{\frac{ 2k \sigma^2\log T }{ T }} \right) \tag{since $\ell \leq k$}
\end{align*}
\endgroup
Using this bound, along with inequalities (\ref{ineq:interim:anytime}), and (\ref{ineq:dunzo:anytime}), we obtain 
\begin{align}
    \left(\prod_{t=\overline{T} + 1}^{T} \E\left[ \mu_{I_t} \right]\right)^\frac{1}{T} \geq (\mu^*)^{\frac{T-\overline{T}}{T}} \left ( 1 -\frac{4}{\mu^*}\sqrt{\frac{ 2k \sigma^2\log T }{T}}  \right) \prob \{\cE\} \label{ineq:toomany:anytime}
\end{align}
Thus, inequalities (\ref{ineq:toomany:anytime}) and (\ref{ineq:phaseone:anytime}) to obtain relevant bounds on the NSW from Phase I and II respectively.
Hence, for the combined Nash social welfare of the algorithm, we have
\begin{align*}
    \left(\prod_{t=1}^{T} \E \left[ \mu_{I_t} \right] \right)^\frac{1}{T}
     & \geq \mu^*\left(1-{\frac{\overline{T}\cdot \log (k)}{T}}-\frac{2}{T}\right) \left( 1 -\frac{4}{\mu^*}\sqrt{\frac{2 k \sigma^2\log T }{T} } \right) \prob\{ \cE \}                   \\
     & \geq \mu^*\left(1-{\frac{\overline{T}\cdot \log (k)}{T}}-\frac{2}{T}\right) \left( 1 -\frac{4}{\mu^*}\sqrt{\frac{ 2k \sigma^2\log T }{T} } \right) \left( 1 - \frac{2}{T} \right) \tag{via good event $\cE$}
\end{align*}
Using the inequality $(1-x)(1-y)\geq 1-x-y \ \forall\ x,y\geq0$ we have,
\begin{align*}
      \left(\prod_{t=1}^{T} \E \left[ \mu_{I_t} \right] \right)^\frac{1}{T}
     & \geq \mu^* \left(1-{\frac{\overline{T}\cdot \log (k)}{T}}-\frac{4}{\mu^*}\sqrt{\frac{2 k \sigma^2\log T }{T} } -\frac{4}{T}\right)   \\  
     & \geq \mu^* \left(1-\frac{\log \left( k\right)}{T}\frac{512 \ k\sigma^2 \log T \  }{(\mu^*)^2}-\frac{4}{\mu^*}\sqrt{\frac{2 k \sigma^2 \log T }{T} } -\frac{4}{T}\right)   \\ 
     & \geq \mu^* \left(1-\frac{512 \ k \sigma^2 \log T \log \left( k\right)\  }{(\mu^*)^2T}-\frac{4}{\mu^*}\sqrt{\frac{2 k \sigma^2 \log T }{T} } -\frac{4}{T}\right)   \\ 
     &= \mu^* \left(1-\frac{256\ \sqrt{k\sigma^2 \log T\log k }}{\mu^* \sqrt{T}} \cdot \frac{2 { \sqrt{k \sigma^2 \log T\log k  }}}{\mu^* \sqrt{T}}-\frac{4k\log T\log k}{T}-\frac{4}{\mu^*}\sqrt{\frac{2 k \sigma^2\log T }{T} } -\frac{4}{T}\right)\\
     &\geq \mu^* \left(1-\frac{256 \sqrt{k\sigma^2 \log T\log k }}{\mu^* \sqrt{T}} -\frac{4}{\mu^*}\sqrt{\frac{2 k \sigma^2\log T }{T} } -\frac{4}{T}\right)\tag{} 
\end{align*}
Note that the last inequality holds since $\frac{2 {\sqrt{k \sigma^2\log T \log k }}}{\mu^* \sqrt{T}} \leq 1$ for large enought $T$ as $\mu^* \geq \frac{ { 40\sigma\sqrt{2k \log T \log k }}}{\sqrt{T}}$. Thus, we get
\begin{align*}
    \left(\prod_{t=1}^{T} \E \left[ \mu_{I_t} \right] \right)^\frac{1}{T}& \geq \mu^*-256 \sigma  \sqrt{\frac{ k \log T \log k }{T} }-\frac{4\mu^*k\log T\log k}{T}-\frac{4\mu^*}{T}.
\end{align*}
The lemma stands proved. 
\end{proof}

\subsection{Proofs for $p$-means}
\label{app:pmeanproof}
We first state a fundamental inequality, known as the \emph{power inequality} or the \emph{generalised mean inequality}, in the following lemma.

\begin{lemma}\label{lem:genMeanIneq}
Let $x_1,\dots,x_n \ge 0$ and for $r\in\mathbb{R}$ define
\[
M_r =
\begin{cases}
\left(\tfrac{1}{n}\sum_{i=1}^n x_i^r\right)^{1/r}, & r\neq 0, \\[6pt]
\left(\prod_{i=1}^n x_i\right)^{1/n}, & r=0.
\end{cases}
\]
Then $M_r$ is strictly increasing in $r$; in particular, if $a<b$ then $M_a<M_b$.
\end{lemma}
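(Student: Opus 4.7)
The plan is to prove the classical power mean inequality in the standard way, via Jensen's inequality, handling the cases of the sign of $r$ and $s$ separately and then bridging them by continuity at $r=0$. I will treat the non-strict inequality first and remark that strictness follows from the strict form of Jensen whenever the $x_i$ are not all equal (which is the only nontrivial case, since otherwise $M_r$ is constant in $r$).

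First I would establish the inequality in the core regime $0 < r < s$. Setting $y_i = x_i^r \ge 0$ and $p = s/r > 1$, the function $\phi(t) = t^p$ is strictly convex on $[0,\infty)$, so Jensen's inequality applied to the uniform distribution on $\{y_1,\dots,y_n\}$ yields
\[
\Bigl(\tfrac{1}{n}\sum_{i=1}^n y_i\Bigr)^{p} \;\le\; \tfrac{1}{n}\sum_{i=1}^n y_i^{p}
\;=\; \tfrac{1}{n}\sum_{i=1}^n x_i^{s}.
\]
Raising to the power $1/s > 0$ preserves the direction of the inequality and gives $M_r \le M_s$, with equality iff all $y_i$ (equivalently all $x_i$) coincide.

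Next I would deduce the case $r < s < 0$ by a reciprocal duality: assuming all $x_i > 0$ (the case of a zero entry forces $M_r = 0$ for negative $r$ so the bound is trivial), a direct algebraic check shows $M_{-r}(x_1,\dots,x_n) = 1/M_{r}(1/x_1,\dots,1/x_n)$ for $r \ne 0$. Applying the already proven positive case to the reciprocals and inverting reverses the inequality in the right way, yielding $M_r \le M_s$ for $r<s<0$.

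Finally, to bridge across $r = 0$, I would prove $\lim_{r \to 0} M_r = M_0$ by taking logarithms, writing $\log M_r = \tfrac{1}{r}\log\bigl(\tfrac{1}{n}\sum_i x_i^r\bigr)$, and applying L'H{\^o}pital (or a Taylor expansion of $x_i^r = 1 + r\log x_i + O(r^2)$) to obtain the limit $\tfrac{1}{n}\sum_i \log x_i = \log M_0$. Combined with the two monotonicity cases already shown, this implies $M_r \le M_0 \le M_s$ for $r<0<s$, and chaining all three regimes covers every pair $a<b$. The main obstacle, such as it is, lies in bookkeeping around degenerate inputs: a zero $x_i$ kills $M_r$ for $r \le 0$ and must be excluded when inverting in the negative regime, and strict versus non-strict inequality must be tracked to ensure the "strictly increasing" conclusion holds whenever the $x_i$ are not all equal. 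These are routine case distinctions rather than genuine technical hurdles.
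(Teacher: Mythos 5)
The paper offers no proof of this lemma at all --- it is invoked as the classical power-mean (generalized-mean) inequality --- so there is nothing to compare your argument against; what matters is only whether your proof stands on its own, and it does. The three-stage structure is the standard one and each stage is sound: Jensen with the strictly convex map $t\mapsto t^{s/r}$ handles $0<r<s$; the identity $M_{-r}(x_1,\dots,x_n)=1/M_r(1/x_1,\dots,1/x_n)$ (which checks out algebraically) transports that case to $r<s<0$; and the limit $\log M_r\to\frac{1}{n}\sum_i\log x_i$ as $r\to 0$ bridges the origin. One point to make explicit in the last stage: the limit alone yields only $M_r\le M_0\le M_s$ for $r<0<s$, so to get strictness across zero you should interpolate an intermediate exponent, e.g.\ $M_0\le M_{s/2}<M_s$, which your positive-range case already supplies. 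A second caveat concerns the lemma statement itself rather than your proof: ``$M_a<M_b$ for all $a<b$'' is literally false not only when all $x_i$ coincide (which you flag) but also when some $x_i=0$ and $a<b\le 0$, since then $M_a=M_b=0$ under the usual convention; you note that zeros trivialize the negative range but do not record that strictness genuinely fails there. Neither degeneracy matters for the paper, which only ever uses the non-strict comparison (geometric mean versus $p$-mean for $p\ge 0$, and $p$-mean versus arithmetic mean for $p\le 1$), so your proof is adequate for every downstream use.
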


\TheoremPLTZero*

\begin{proof}
We split our analysis into three cases, where $p> 0$, and $p \leq 0$ respectively
\subsubsection*{When $p \geq 0$}
Invoking the generalised mean inequality (Lemma \ref{lem:genMeanIneq} for $a=0$ and $b=p>0$, we have
\begin{align*}
    \left(\prod_{t=1}^{T} \E \left[ \mu_{I_t} \right] \right)^\frac{1}{T}\leq\left(\frac{\sum_{t=1}^T(\E_{I_t}[\mu_{I_t}])^p}{T}\right)^\frac{1}{p}
\end{align*}
Thus, we get the following bound on the p-mean regret using Lemma \ref{lem:modified_ncb}
\begin{align*}
    \mathrm{R}^{p}_T & \triangleq \mu^* - \left(\frac{\sum_{t=1}^T(\E_{I_t}[\mu_{I_t}])^p}{T}\right)^\frac{1}{p} \leq \mu^* - \left(\prod_{t=1}^{T} \E \left[ \mu_{I_t} \right] \right)^\frac{1}{T} \\
    & \leq \mu^* - \left(\mu^*-256\sigma \sqrt{\frac{ k \log T\log k  }{T} }-\frac{4\mu^*k\log T\log k}{T}-\frac{8\mu^*}{T}\right)\\
    & \leq O\left(\sigma\sqrt{\frac{k\log T\log k}{T}}\right)
\end{align*}

\subsubsection*{When $p < 0$}
Notice that when $\mu^* \leq \frac{40\sigma k^{\frac{|p|+1}{2}}\sqrt{\log T}}{\sqrt T}$, the p-mean regret satisfies
\begin{align}
    \mathrm{R}^{q}_T = \mu^* - \left(\frac{\sum_{t=1}^T(\E_{I_t}[\mu_{I_t}])^p}{T}\right)^\frac{1}{p} \leq \mu^* \leq O\left(\frac{\sigma |p|k^\frac{|p|+1}{2}\sqrt{\log T}}{\sqrt T}\right)\label{ineq:muLess}
\end{align}
Next, we consider the case when $\mu^* \geq \frac{40|p|\sigma k^{\frac{|p+1|}{2}}\sqrt{\log T}}{\sqrt T}$. Firstly, we set $q=-p$ for notational convenience. 
Hence, we have $|p_a|=|p|=q$ wherever required. Thus, our objective is converted to minimising the following quantity
\begin{align}
    \mathrm{R}^{q}_T \triangleq \mu^* - \left(\frac{T}{\sum_{t=1}^T \frac{1}{\left(\E_{I_t}[\mu_{I_t}]\right)^q}}\right)^\frac{1}{q}
\end{align}
We will refer to the above quantity as $q-$regret. Towards analyzing $\mathrm{R}^{q}_T$, we first set the threshold for Phase I runs as $\overline{T}=128kS$ (i.e., the maximum length of Phase I). Then define
\begin{align*}
x \triangleq \frac{T}{\sum_{t=1}^{\overline{T}} \frac{1}{\E_{I_t}[\mu_{I_t}]^q}} &\mbox{ and } y \triangleq \frac{T}{\sum_{t=\overline{T}+1}^{T} \frac{1}{\E_{I_t}[\mu_{I_t}]^q}},
\end{align*}
so that we have
\begin{equation}\label{eqn:npreg_decomp}
	\mathrm{R}^{q}_T = \mu^* - \left(\frac{1}{\frac{1}{x} + \frac{1}{y}}\right)^{1/q}.
\end{equation}

To obtain an upper bound for $\mathrm{R}^{q}_T$, we need to upper bound $\frac{1}{x}$ and $\frac{1}{y}$. Let us start by focusing on $\frac{1}{x}$. By uniform exploration in Phase I, we have 
\begin{align*}
\E_{I_t}[\mu_{I_t}] \geq \frac{\mu^*}{k} \Leftrightarrow \frac{1}{\left(\E_{I_t}[\mu_{I_t}]\right)^q} \leq \left(\frac{k}{\mu^*}\right)^q.
\end{align*}
Hence,
\begin{equation}\label{eqn:npreg_x_inv_bound}
	\frac{1}{x} \leq \frac{\overline{T}k^q}{(\mu^*)^q T}.
\end{equation}
Next, we will focus on $\frac{1}{y}$. Note that when we condition the expectation on the good event $\cE$, the following inequality holds trivially
\begin{align*}
\E_{I_t}[\mu_{I_t}] &\geq \Pr\{\mathcal{E}\}\E_{I_t}[\mu_{I_t}|\mathcal{E}] 
\end{align*}
Hence,
\begin{align}\label{eqn:npreg_y_bound}
	y & \geq \frac{T}{\sum_{t=\overline{T}+1}^{T} \frac{1}{(\E_{I_t}[\mu_{I_t}|\mathcal{E}] \Pr\{\mathcal{E}\})^q}} =\frac{T(\Pr\{\mathcal{E}\})^q}{\sum_{t=\overline{T}+1}^{T} \frac{1}{(\E_{I_t}[\mu_{I_t}|\mathcal{E}] )^q}}\\
\end{align}

Now, we know that by Jensen's inequality, $f(z) = z^{-\frac{1}{q}}$ is convex on $\mathbb{R}_{>0}$, for $q > 0$. Utilising this result and the linearity of expectation, we get
\begin{align*}
\frac{1}{y} \leq  \frac{\sum_{t=\overline{T}+1}^{T} \frac{1}{(\E_{I_t}[\mu_{I_t}|\mathcal{E}] )^q}}{T(\Pr\{\mathcal{E}\})^q} \leq \frac{\sum_{t=\overline{T}+1}^T \E_{I_t}\left[\frac{1}{\left(\mu_{I_t}\right)^q}\bigg \vert \mathcal{E}\right]}{T(\Pr\{\mathcal{E}\})^q} = \frac{\E_{I_1,\ldots,I_t} \left[\sum_{t=\overline{T}+1}^T \frac{1}{\left(\mu_{I_t}\right)^q} \bigg\vert\mathcal{E}\right]}{T(\Pr\{\mathcal{E}\})^q}
\end{align*}

For simplicity, we will drop the subscripts. By reindexing the arms so that $\{1,2,\ldots,\ell\}$ are the arms pulled at least once in Phase II, and letting $m_i$ be the number of times (the reindexed) arm $i$ is pulled in Phase II, we have
\begin{align*}
    \frac{\E \left[\sum_{t=\overline{T}+1}^T \frac{1}{\left(\mu_{I_t}\right)^q} \bigg\vert\mathcal{E}\right]}{T(\Pr\{\mathcal{E}\})^q} & = \frac{\E \left[\sum_{i=1}^\ell m_i (\mu_i)^{-q} \bigg\vert\mathcal{E}\right]}{T(\Pr\{\mathcal{E}\})^q} 
      \leq \frac{\E \left[\sum_{i=1}^\ell m_i \left(\mu^*-4\sqrt{\frac{2\sigma^2 \log T}{n_i}}\right)^{-q}\bigg\vert\mathcal{E}\right]}{T(\Pr\{\mathcal{E}\})^q} \tag{via Lemma \ref{lem:suboptimal_arms:anytime}}\\
\end{align*}

Now, we split our analysis into two cases: when $p < -1$ and $-1\leq p< 0$ respectively. We will use two separate claims for these cases

\textit{Case 1: $p < -1$:} In this case, $q = |p_a| =|p|$. The following claim holds for this case (see Appendix \ref{proof:second} for proof).

\begin{restatable}{claim}{LemmaNumericSecond}
\label{lem:binomial_second}
For all $q \geq 1$ and reals $x \in \left[0, \frac{1}{2q}\right]$, we have $(1-x)^{-q} \leq  1 + 2qx$. 
\end{restatable}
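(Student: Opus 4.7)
The plan is to exploit convexity of $f(x) = (1-x)^{-q}$ together with a single endpoint estimate. On $[0,1)$, $f''(x) = q(q+1)(1-x)^{-q-2} > 0$, so $f$ is convex. Consequently, on the closed interval $[0, 1/(2q)]$ the graph of $f$ lies below the secant line joining $(0, f(0)) = (0,1)$ to $(1/(2q),\, f(1/(2q)))$. Writing out this secant at a point $x \in [0, 1/(2q)]$,
\[
f(x) \;\leq\; 1 + \frac{f(1/(2q)) - 1}{1/(2q)}\, x \;=\; 1 + 2q\bigl[(1-1/(2q))^{-q} - 1\bigr] x.
\]
Thus the target inequality $(1-x)^{-q} \leq 1 + 2qx$ reduces to the single scalar statement $(1-1/(2q))^{-q} \leq 2$, or equivalently $(1 - 1/(2q))^{q} \geq 1/2$.

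To settle this, I would invoke the classical fact that $n \mapsto (1-1/n)^n$ is (strictly) increasing on $[1,\infty)$ with limit $1/e$. For completeness, one can verify monotonicity by differentiating: $\tfrac{d}{dn}\bigl[n\ln(1-1/n)\bigr] = \ln(1-1/n) + \tfrac{1}{n-1}$, and setting $y = 1/n \in (0,1)$ the identity $\ln(1-y) + \tfrac{y}{1-y} = \sum_{k\geq 2} \tfrac{(k-1)y^k}{k} \geq 0$ shows the derivative is nonnegative. Since $q \geq 1$ implies $2q \geq 2$, monotonicity yields $(1-1/(2q))^{2q} \geq (1-1/2)^2 = 1/4$, and taking square roots gives $(1-1/(2q))^{q} \geq 1/2$, which is exactly what was needed.

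I do not expect any serious obstacle: the argument is short and elementary once convexity is applied. The only mildly delicate step is the auxiliary bound $(1-1/(2q))^q \geq 1/2$, and the monotonicity fact cited there is standard. As a sanity check, the inequality $(1-x)^{-q} \leq 1 + 2qx$ is actually tight at both endpoints when $q = 1$ (both sides equal $1$ at $x=0$ and both equal $2$ at $x=1/2$), which confirms that the secant-based reduction cannot be improved without shrinking the domain.
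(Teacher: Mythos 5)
Your proof is correct, and it reaches the key endpoint inequality $(1-\tfrac{1}{2q})^{q}\ge \tfrac12$ by a genuinely different route than the paper. The paper studies $\Phi(x)=\ln(1+2qx)+q\ln(1-x)$, computes $\Phi'$, shows $\Phi$ is unimodal on $[0,\tfrac{1}{2q}]$ (increasing then decreasing), and concludes the minimum is attained at an endpoint; you instead observe that $f(x)=(1-x)^{-q}$ is convex and therefore lies below its secant on $[0,\tfrac{1}{2q}]$, which reduces the whole claim to the single scalar inequality $f(\tfrac{1}{2q})\le 2$ in one step, with no sign analysis of a derivative on the interval. Your secant argument is cleaner and slightly more robust (it makes transparent why only the endpoint matters, and your observation that equality holds at both endpoints when $q=1$ shows the constant $2q$ is sharp). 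For the endpoint bound itself the two proofs are close in spirit but not identical: the paper shows $\psi(q)=q\ln(1-\tfrac{1}{2q})$ is increasing and evaluates at $q=1$, while you invoke monotonicity of $n\mapsto(1-\tfrac1n)^{n}$ at $n=2q\ge 2$ and take a square root — the same underlying series estimate $\ln(1-y)+\tfrac{y}{1-y}\ge 0$ appears in both. All steps check out; there is no gap.
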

Now, let $u=4\sqrt{\frac{2\sigma^2\log T}{n_i}}$. Then we have
\begin{align*}
u&=4\sqrt{\frac{2\sigma^2\log T}{n_i}}\leq 4\sqrt{\frac{4k\sigma^2\log T}{\overline{T}}}     \tag{$n_i\geq\frac{\overline{T}}{2k}$}\\
& =4\sqrt{\frac{4k\sigma^2(\mu^*)^2\log T}{512kp^2\sigma^2\log T}} = \frac{4\mu^*}{\sqrt {128p^2}} \tag{$\overline{T} = \frac{512k\sigma^2p^2\log T}{(\mu^*)^2}$}
\end{align*}
Thus, the quantity $x = \frac{u}{\mu^*} = \frac{4}{\sqrt{128}|p|} \leq \frac{1}{2q}$ (as $|p|=q$).

\textit{Case 2: $-1 \leq p < 0$:} For this case, $ p_a=1$; we modify the above claim slightly to get the following result (see Appendix \ref{proof:third} for proof).
\begin{restatable}{claim}{LemmaNumericThird}
\label{lem:binomial_third}
For all $0 < q \leq 1$ and reals $x \in \left[0, \frac{1}{2}\right]$, we have $(1-x)^{-q} \leq  1 + 2qx$. 
\end{restatable}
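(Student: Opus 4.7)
The plan is to introduce the auxiliary function $g(x) := 1 + 2qx - (1-x)^{-q}$ and argue that $g(x) \geq 0$ on $[0, 1/2]$. Immediately $g(0) = 0$, so the inequality holds at the left endpoint; the heart of the proof is to propagate this to the rest of the interval.

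My first step is to observe that $g$ is concave on $[0, 1/2]$: the function $(1-x)^{-q}$ has second derivative $q(q+1)(1-x)^{-q-2} > 0$ and is therefore convex, while $1 + 2qx$ is linear, so $g$ is concave as a difference of a linear function and a convex function. The standard chord bound for concave functions then yields $g(x) \geq (1 - 2x)\, g(0) + 2x\, g(1/2) = 2x\, g(1/2)$ for every $x \in [0, 1/2]$. It thus suffices to verify $g(1/2) = 1 + q - 2^q \geq 0$ for $q \in (0, 1]$.

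This remaining step is itself a one-dimensional concavity argument. Setting $\varphi(q) := 1 + q - 2^q$, we have $\varphi(0) = 0$, $\varphi(1) = 0$, and $\varphi''(q) = -(\ln 2)^2\, 2^q < 0$, so $\varphi$ is strictly concave on $[0, 1]$ and hence non-negative between its two zeros. Therefore $g(1/2) \geq 0$, and combining with the chord bound above gives $g(x) \geq 0$ on $[0, 1/2]$, which is the claim.

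The subtle point, and the main reason a naive approach fails, is that $g$ is \emph{not} monotone on $[0, 1/2]$: its derivative $g'(x) = 2q - q(1-x)^{-q-1}$ changes sign at $x = 1 - 2^{-1/(q+1)} \in (0, 1/2]$, so one cannot simply check $g'(x) \geq 0$. The concavity-plus-endpoint strategy (chord bound in $x$, followed by a chord bound in $q$) is what makes the proof go through cleanly, and is the natural analogue of the monotonicity argument used for Claim~\ref{lem:binomial_second} in the $q \geq 1$ regime.
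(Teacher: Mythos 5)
Your proof is correct. Every step checks out: $(1-x)^{-q}$ has second derivative $q(q+1)(1-x)^{-q-2}>0$ in $x$, so $g$ is indeed concave; the chord bound $g(x)\ge 2x\,g(1/2)$ is a valid instance of concavity with weights $1-2x$ and $2x$; and the endpoint inequality $2^q\le 1+q$ on $[0,1]$ follows exactly as you say from concavity of $\varphi(q)=1+q-2^q$ with $\varphi(0)=\varphi(1)=0$. Your route differs from the paper's, which fixes $x$ and instead exploits convexity of $q\mapsto(1-x)^{-q}$ on $[0,1]$: a single chord bound in $q$ gives $(1-x)^{-q}\le(1-q)+q(1-x)^{-1}=1+q\tfrac{x}{1-x}\le 1+2qx$, the last step using $1/(1-x)\le 2$. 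So the paper interpolates once in $q$, whereas you interpolate in $x$ and then again in $q$ to dispose of the residual inequality $g(1/2)\ge 0$; the paper's version is a bit shorter, but yours makes transparent where the restriction $x\le 1/2$ enters (it fixes the right endpoint of the chord) and correctly flags that a naive monotonicity check on $g$ fails because $g'$ changes sign inside the interval. Both arguments are sound; the difference is one of decomposition rather than substance.
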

Again, let $u=4\sqrt{\frac{2\sigma^2\log T}{n_i}}$. Then we have
\begin{align*}
u&=4\sqrt{\frac{2\sigma^2\log T}{n_i}}\leq 4\sqrt{\frac{4k\sigma^2\log T}{\overline{T}}}     \tag{$n_i\geq\frac{\overline{T}}{2k}$}\\
& =4\sqrt{\frac{4k\sigma^2(\mu^*)^2\log T}{512k\sigma^2\log T}} = \frac{4\mu^*}{\sqrt {128}} \tag{$\overline{T} = \frac{512k\sigma^2\log T}{(\mu^*)^2}$}
\end{align*}
which implies, the quantity $x = \frac{u}{\mu^*} = \frac{4}{\sqrt{128}} \leq \frac{1}{2}$. Thus, we have, by Claim \ref{lem:binomial_second} and  \ref{lem:binomial_third} with $x = \frac{u}{\mu^*}$, $\forall \ p < 0$,
\begin{align*}
\frac{1}{y} & \leq \frac{\E \left[\sum_{i=1}^\ell m_i (\mu^*)^{-q} \left(1+\frac{4q}{\mu^*}\sqrt{\frac{2\sigma^2 \log T}{n_i}}\right)\bigg\vert\mathcal{E}\right]}{T(\Pr\{\mathcal{E}\})^q} \\
     & \leq \frac{\E \left[\sum_{i=1}^\ell m_i (\mu^*)^{-q} \left(1+\frac{4q}{\mu^*}\sqrt{\frac{2\sigma^2 \log T}{m_i}}\right)\bigg\vert\mathcal{E}\right]}{T(\Pr\{\mathcal{E}\})^q} \tag{since $n_i \geq m_i$}  \\
     & \leq \frac{\E \left[\sum_{i=1}^\ell (\mu^*)^{-q} \left(m_i +\frac{4q\sqrt{m_i}}{\mu^*}\sqrt{{2\sigma^2 \log T}}\right)\bigg\vert\mathcal{E}\right]}{T(\Pr\{\mathcal{E}\})^q}  
\end{align*}
Now, invoking Cauchy-Schwarz inequality on $\sum_{i=1}^\ell\sqrt{m_i}$ and using $m_i \leq T-\overline{T}$, we have
\begin{align}
     \frac{1}{y}& \leq \frac{\E \left[(\mu^*)^{-q} \left(T-\overline{T} +\frac{4q\sqrt{\ell(T-\overline{T})}}{\mu^*}\sqrt{{2\sigma^2 \log T}}\right)\bigg\vert\mathcal{E}\right]}{T(\Pr\{\mathcal{E}\})^q} \nonumber \\
     & \leq \frac{\E \left[(\mu^*)^{-q} \left(T +\frac{4q\sqrt{kT}}{\mu^*}\sqrt{{2\sigma^2 \log T}}\right)\bigg\vert\mathcal{E}\right]}{T(\Pr\{\mathcal{E}\})^q} \tag{$\ell \leq k$ and $T-\overline{T}\leq T$}\nonumber\\
     & = \frac{(\mu^*)^{-q}\left(1 + \frac{4q\sqrt{2k\sigma^2\log T}}{\sqrt{T}\mu^*}\right)T}{T(\Pr\{\mathcal{E}\})^q} \nonumber\\
     & \leq \frac{1}{\left((\mu^*)^q-\frac{4q\sqrt{2k\sigma^2\log T}(\mu^*)^{q-1}}{\sqrt T}\right){(\Pr\{\mathcal{E}\})^q}}, \label{ineq:boundy}
\end{align}
where inequality \eqref{ineq:boundy} holds using the standard inequality $(1+x) \leq \frac{1}{1-x} \ \forall \ 0 \le x \le 1$. Using the above inequality   and inequality \eqref{eqn:npreg_x_inv_bound}, we get
\begin{align*}
    \frac{1}{\frac{1}{x}+\frac{1}{y}} &\geq \frac{1}{\frac{\overline{T}k^q}{(\mu^*)^q T} + \frac{1}{\left((\mu^*)^q-\frac{4q\sqrt{2k\sigma^2\log T}(\mu^*)^{q-1}}{\sqrt T}\right){(\Pr\{\mathcal{E}\})^q}}} \\
    & = \frac{\left((\mu^*)^q-\frac{4q\sqrt{2k\sigma^2\log T}(\mu^*)^{q-1}}{\sqrt T}\right){(\Pr\{\mathcal{E}\})^q}}{1+\frac{\overline{T}k^q}{T}\left(1-\frac{4q\sqrt{2k\sigma^2\log T}}{\mu^*\sqrt T}\right){(\Pr\{\mathcal{E}\})^q}}
\end{align*}
Multiplying the numerator and denominator by $1-\frac{\overline{T}k^q}{T}\left(1-\frac{4q\sqrt{2k\sigma^2\log T}}{\mu^*\sqrt T}\right){(\Pr\{\mathcal{E}\})^q}$, we have
\begin{align}
    \frac{1}{\frac{1}{x}+\frac{1}{y}} &\geq \frac{\left((\mu^*)^q-\frac{4q\sqrt{2k\sigma^2\log T}(\mu^*)^{q-1}}{\sqrt T}\right){(\Pr\{\mathcal{E}\})^q}\left(1-\frac{\overline{T}k^q}{T}\left(1-\frac{4q\sqrt{2k\sigma^2\log T}}{\mu^*\sqrt T}\right){(\Pr\{\mathcal{E}\})^q}\right)}{1-\left(\frac{\overline{T}k^q}{T}\left(1-\frac{4q\sqrt{2k\sigma^2\log T}}{\mu^*\sqrt T}\right){(\Pr\{\mathcal{E}\})^q}\right)^2}\nonumber \\
    &\geq  {\left((\mu^*)^q-\frac{4q\sqrt{2k\sigma^2\log T}(\mu^*)^{q-1}}{\sqrt T}\right){(\Pr\{\mathcal{E}\})^q}\left(1-\frac{\overline{T}k^q}{T}\left(1-\frac{4q\sqrt{2k\sigma^2\log T}}{\mu^*\sqrt T}\right){(\Pr\{\mathcal{E}\})^q}\right)} \label{ineq:boundMean}
\end{align}
where the last inequality holds because of the denominator being less than 1.

We can expand Inequality \eqref{ineq:boundMean} to get
\begin{align*}
	\frac{1}{\frac{1}{x} + \frac{1}{y}} &\geq (\mu^*)^q \left(1-\frac{4q\sqrt{2k\sigma^2\log T}}{\mu^*\sqrt T}\right){(\Pr\{\mathcal{E}\})^q}\left(1-\frac{\overline{T}k^q}{T}\left(1-\frac{4q\sqrt{2k\sigma^2\log T}}{\mu^*\sqrt T}\right){(\Pr\{\mathcal{E}\})^q}\right)\\
    & \geq (\mu^*)^q \left(1-\frac{4q\sqrt{2k\sigma^2\log T}}{\mu^*\sqrt T}\right){(\Pr\{\mathcal{E}\})^q}\left(1-\frac{\overline{T}k^q}{T}\right) \tag{since $\left(1-\frac{4q\sqrt{2k\sigma^2\log T}}{\mu^*\sqrt T}\right){(\Pr\{\mathcal{E}\})^q} \leq 1$}\\
    & \geq (\mu^*)^q(\Pr\{\mathcal{E}\})^q\left( 1-\frac{4q\sqrt{2k\sigma^2\log T}}{\mu^*\sqrt T} -\frac{\overline{T}k^q}{T} \right) \tag{using $(1-x)(1-y) \geq (1-x-y) \ \forall \ x,y >0$} \\
    & = (\mu^*)^q(\Pr\{\mathcal{E}\})^q\left( 1-\frac{4q\sqrt{2k\sigma^2\log T}}{\mu^*\sqrt T} -\frac{128Sk^{q+1}}{T} \right) \\
    & = (\mu^*)^q(\Pr\{\mathcal{E}\})^q\left( 1-\frac{4q\sqrt{2k\sigma^2\log T}}{\mu^*\sqrt T} -\frac{512q^2k^{q+1}\sigma^2 \log T}{(\mu^*)^2 T}  \right)\tag{$q=|p|$}
\end{align*}
Exponentiating the last inequality by $\frac{1}{q}$, we have
\begin{align*}
\left(\frac{1}{\frac{1}{x} + \frac{1}{y}}\right)^{\tfrac{1}{q}}&\geq (\mu^*)(\Pr\{\mathcal{E}\})\left( 1-\frac{4q\sqrt{2k\sigma^2\log T}}{\mu^*\sqrt T} -\frac{512q^2k^{q+1}\sigma^2 \log T}{(\mu^*)^2 T}\right)^{\frac{1}{q}}\\
\end{align*}
Now, consider the following term 
\begin{align*}
    v& = \frac{4q\sqrt{2k\sigma^2\log T}}{\mu^*\sqrt T} + \frac{512q^2k^{q+1}\sigma^2 \log T}{(\mu^*)^2 T} \\
    & \leq \frac{4}{40k^{\frac{q}{2}}} +\frac{512}{1600} \tag{$\mu^*\geq\frac{40|p|\sigma k^{\frac{|p|+1}{2}}\sqrt{\log T}}{\sqrt T}$} \leq \frac{1}{2} 
\end{align*}
Thus, we can apply Claim \ref{lem:binomial} on $(1-v)^{\frac{1}{q}}$ to get
\begin{align*}
\left(\frac{1}{\frac{1}{x} + \frac{1}{y}}\right)^{\tfrac{1}{q}}&\geq (\mu^*)(\Pr\{\mathcal{E}\})\left( 1-\frac{8\sqrt{2k\sigma^2\log T}}{\ \mu^*\sqrt T} -\frac{1024q^2k^{q+1}\sigma^2 \log T}{q(\mu^*)^2 T}\right)
\end{align*}
Further, substituting $\Pr\{\mathcal{E}\} \geq 1 - \frac{2}{T}$, we have
\begin{align*}
\left(\frac{1}{\frac{1}{x} + \frac{1}{y}}\right)^{\frac{1}{q}} &\geq (\mu^*)\left(1-\frac{2}{T}\right)\left( 1-\frac{8\sqrt{2k\sigma^2\log T}}{\ \mu^*\sqrt T} -\frac{1024q^2k^{q+1}\sigma^2 \log T}{q(\mu^*)^2 T} \right)\\
&\geq (\mu^*)\left( 1-\frac{8\sqrt{2k\sigma^2\log T}}{\ \mu^*\sqrt T} -\frac{1024q^2k^{q+1}\sigma^2 \log T}{q(\mu^*)^2 T}  - \frac{2}{T} \right) \tag{using $(1-x)(1-y) \geq (1-x-y) \ \forall \ x,y >0$}
\end{align*}
Thus, the $q-$regret satisfies
\begin{align*}
	\mathrm{R}_T^q  &\leq \mu^* -  (\mu^*)\left( 1-\frac{8\sqrt{2k\sigma^2\log T}}{\ \mu^*\sqrt T} -\frac{1024q^2k^{q+1}\sigma^2 \log T}{q(\mu^*)^2 T} - \frac{2}{T} \right)\\
    & \leq \frac{8\sqrt{2k\sigma^2\log T}}{\sqrt T} + \frac{1024q^2k^{q+1}\sigma^2 \log T}{q(\mu^*) T}  + \frac{2\mu^*}{T} \\
    & \leq \frac{8\sqrt{2k\sigma^2\log T}}{\sqrt T} + \frac{256k^{\frac{q+1}{2}}\sigma \sqrt{\log T}}{10  \sqrt T}  + \frac{2\mu^*}{T}  \tag{$\mu^*\geq\frac{40{|p|}\sigma k^{\frac{|p|+1}{2}}\sqrt{\log T}}{\sqrt T}$ }\\
\end{align*}
and as a result, the $p-$mean regret satisfies
\begin{align}
    \mathrm{R}_T^p \leq  O\left(\frac{\sigma k^\frac{|p|+1}{2} \sqrt{\log T}}{\sqrt T}\right)\label{ineq:muGreater}
\end{align}
Thus, from inequalities \eqref{ineq:muLess} and \eqref{ineq:muGreater}, we get the final regret bound as
\begin{align*}
    \mathrm{R}_T^p \leq  O\left(\frac{\sigma k^\frac{|p|+1}{2} \sqrt{\log T}}{\sqrt T}\cdot\max(1,|p|)\right)
\end{align*}
which completes the proof of the theorem.
\end{proof}

\section{Auxillary Lemmas}
\label{sec:auxLem}
\begin{lemma}[Chernoff Bound]\label{lem:chernoff}
	Let $Z_1, \ldots, Z_n$ be independent Bernoulli random variables. Consider the sum $S = \sum_{r=1}^n Z_r$ and let $\nu = \E[S]$ be its expected value. Then, for any $\varepsilon \in [0,1]$, we have 
	\begin{align*}
		\prob \left\{ S \leq (1-\varepsilon) \nu \right\} & \leq \mathrm{exp} \left( -\frac{\nu \varepsilon^2}{2} \right), \text{  and} \\ 
		\prob \left\{ S \geq (1+\varepsilon) \nu \right\} & \leq \mathrm{exp} \left( -\frac{\nu \varepsilon^2}{3} \right).
	\end{align*}
\end{lemma}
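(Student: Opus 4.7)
The plan is to prove both inequalities by the standard Chernoff (exponential-moment) method, treating the two tails separately but in parallel. The key inputs are (i) Markov's inequality applied to $e^{\lambda S}$, (ii) independence to factorize the moment generating function, (iii) the elementary bound $1+x \le e^x$ to control each Bernoulli MGF, and (iv) tight numerical inequalities to go from the resulting $((e^\varepsilon/(1+\varepsilon)^{1+\varepsilon}))^\nu$ (resp.\ $(e^{-\varepsilon}/(1-\varepsilon)^{1-\varepsilon})^\nu$) expression to the clean Gaussian-like form stated in the lemma.

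\textbf{Upper tail.} Let $p_r = \E[Z_r]$ so that $\nu = \sum_r p_r$. For any $\lambda > 0$, Markov's inequality gives
\begin{align*}
\Pr\{S \ge (1+\varepsilon)\nu\} \;\le\; e^{-\lambda(1+\varepsilon)\nu}\,\E[e^{\lambda S}] \;=\; e^{-\lambda(1+\varepsilon)\nu}\prod_{r=1}^{n}\E[e^{\lambda Z_r}],
\end{align*}
where the last equality uses independence. Since $\E[e^{\lambda Z_r}] = 1 + p_r(e^\lambda - 1) \le \exp(p_r(e^\lambda-1))$, we obtain $\E[e^{\lambda S}] \le \exp(\nu(e^\lambda - 1))$. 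Choosing $\lambda = \ln(1+\varepsilon) > 0$ yields
\begin{align*}
\Pr\{S \ge (1+\varepsilon)\nu\} \;\le\; \left(\frac{e^\varepsilon}{(1+\varepsilon)^{1+\varepsilon}}\right)^{\!\nu}.
\end{align*}
It then suffices to show that $\varepsilon - (1+\varepsilon)\ln(1+\varepsilon) \le -\varepsilon^2/3$ for $\varepsilon \in [0,1]$, which is a one-variable calculus fact: the function $f(\varepsilon) = (1+\varepsilon)\ln(1+\varepsilon) - \varepsilon - \varepsilon^2/3$ vanishes at $0$ and satisfies $f'(\varepsilon) = \ln(1+\varepsilon) - 2\varepsilon/3 \ge 0$ on $[0,1]$ (checked either by computing $f''$ or by comparing $\ln(1+\varepsilon)$ with its secant).

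\textbf{Lower tail.} For any $\lambda > 0$, Markov applied to $e^{-\lambda S}$ gives
\begin{align*}
\Pr\{S \le (1-\varepsilon)\nu\} \;\le\; e^{\lambda(1-\varepsilon)\nu}\,\E[e^{-\lambda S}] \;\le\; \exp\!\left(\lambda(1-\varepsilon)\nu + \nu(e^{-\lambda}-1)\right),
\end{align*}
using the same MGF bound $\E[e^{-\lambda Z_r}] = 1 + p_r(e^{-\lambda}-1) \le \exp(p_r(e^{-\lambda}-1))$. Optimizing with $\lambda = -\ln(1-\varepsilon) > 0$ produces the bound $(e^{-\varepsilon}/(1-\varepsilon)^{1-\varepsilon})^\nu$, and the proof reduces to the scalar inequality $-\varepsilon - (1-\varepsilon)\ln(1-\varepsilon) \le -\varepsilon^2/2$ for $\varepsilon \in [0,1)$ (the boundary $\varepsilon=1$ being trivial since $\Pr\{S \le 0\} \le e^{-\nu} \le e^{-\nu/2}$). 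This last inequality follows from the power-series expansion $-(1-\varepsilon)\ln(1-\varepsilon) = \varepsilon - \sum_{k\ge 2}\varepsilon^k/(k(k-1))$, which immediately gives $-\varepsilon-(1-\varepsilon)\ln(1-\varepsilon) = -\sum_{k\ge 2}\varepsilon^k/(k(k-1)) \le -\varepsilon^2/2$.

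\textbf{Main obstacle.} The Chernoff template itself is entirely mechanical; the only non-trivial step is verifying the two scalar inequalities that convert the optimized exponents into the clean Gaussian-type forms $-\nu\varepsilon^2/3$ and $-\nu\varepsilon^2/2$. The asymmetry between the two constants ($1/3$ versus $1/2$) comes precisely from the fact that the upper-tail inequality $(1+\varepsilon)\ln(1+\varepsilon) - \varepsilon \ge \varepsilon^2/3$ is tight on $[0,1]$ with constant $1/3$, whereas the lower-tail analog admits the sharper constant $1/2$ due to the telescoping of the series above. I would present the two calculus lemmas as short sub-claims and relegate the full monotonicity check to a one-line derivative computation.
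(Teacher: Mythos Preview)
Your proof is correct and follows the standard Chernoff/exponential-moment argument. Note that the paper does not actually prove this lemma: it is listed in the ``Auxiliary Lemmas'' section as a well-known result and stated without proof, so there is nothing to compare against. Your derivation is exactly the textbook one, and the two scalar inequalities you isolate (including the verification that $f'(\varepsilon)=\ln(1+\varepsilon)-2\varepsilon/3\ge 0$ on $[0,1]$, which holds since $f'(0)=0$, $f'(1)=\ln 2-2/3>0$, and $f'$ is unimodal) are handled correctly.
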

\begin{lemma}[Hoeffding Inequality]\label{lem:hoeffding}
	Let $Z_1, \ldots, Z_n$ be independent random variables, with mean $\mu$ and subgaussianity parameter $\sigma$. Consider the empirical mean $\hat{\mu} = \frac{1}{n}\sum_{r=1}^n Z_r$. Then, we have 
	\begin{align*}
		\prob \left\{ |\hat{\mu}-\mu| \geq  \epsilon \right\} & \leq 2\exp \left( -\frac{n \epsilon^2}{2\sigma^2} \right)
	\end{align*}
\end{lemma}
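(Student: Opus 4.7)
The plan is to prove this as a standard Chernoff-style concentration bound, leveraging the subgaussian moment generating function bound stated in the Preliminaries section (where the paper already defined $\sigma$-subgaussianity via $\mathbb{E}[\exp(\lambda(R_i - \mu_i))] \leq \exp(\sigma^2\lambda^2/2)$). I would first handle the one-sided tail $\Pr\{\hat\mu - \mu \geq \epsilon\}$ and then obtain the two-sided bound by symmetry and a union bound, which accounts for the factor of $2$ in the stated inequality.

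For the one-sided tail, I would apply the Cramer--Chernoff exponential-moment method. Namely, for any $\lambda > 0$,
\begin{align*}
\Pr\{\hat\mu - \mu \geq \epsilon\} = \Pr\Big\{\textstyle\sum_{r=1}^n (Z_r - \mu) \geq n\epsilon\Big\} \leq e^{-\lambda n \epsilon}\,\mathbb{E}\Big[\exp\Big(\lambda\textstyle\sum_{r=1}^n (Z_r - \mu)\Big)\Big],
\end{align*}
using Markov's inequality applied to the monotone function $x \mapsto e^{\lambda x}$. By independence of the $Z_r$, the expectation factors as $\prod_{r=1}^n \mathbb{E}[\exp(\lambda(Z_r - \mu))]$, and each factor is at most $\exp(\sigma^2\lambda^2/2)$ by the subgaussian assumption. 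Combining these gives the bound $\exp(n\sigma^2\lambda^2/2 - \lambda n \epsilon)$, which I would optimize over $\lambda > 0$. The minimizer is $\lambda^\star = \epsilon/\sigma^2$, yielding $\Pr\{\hat\mu - \mu \geq \epsilon\} \leq \exp(-n\epsilon^2/(2\sigma^2))$.

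For the lower tail $\Pr\{\mu - \hat\mu \geq \epsilon\}$, I would repeat the same argument with $-Z_r$ in place of $Z_r$, noting that if $Z_r$ is $\sigma$-subgaussian with mean $\mu$, then $-Z_r$ is $\sigma$-subgaussian with mean $-\mu$ (the defining MGF inequality is symmetric in $\lambda \mapsto -\lambda$). This yields the same exponential bound, and a union bound on the two tails gives the factor of $2$ stated in the lemma.

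The proof is entirely routine; there is no real obstacle. The only subtlety worth flagging is that the lemma statement refers to the $Z_r$ as having a common "mean $\mu$ and subgaussianity parameter $\sigma$" but does not explicitly assume identical distributions; fortunately, the argument above only uses independence together with each individual MGF bound $\mathbb{E}[\exp(\lambda(Z_r - \mu))] \leq \exp(\sigma^2\lambda^2/2)$, so no identical-distribution assumption is needed. All other steps (Markov, factoring via independence, optimizing a quadratic in $\lambda$) are standard manipulations.
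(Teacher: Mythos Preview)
Your argument is correct and is exactly the standard Cram\'er--Chernoff derivation of the Hoeffding bound for sub-Gaussian random variables. Note, however, that the paper does not actually supply a proof of this lemma: it is listed in the Auxiliary Lemmas section as a standard fact without justification, so there is no paper proof to compare against; your write-up would serve as a complete proof where the paper gives none.
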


\LemmaNumeric*


\begin{proof}
For $x\in[0,\tfrac12]$ we have
\[
-\ln(1-x)=\sum_{n\ge1}\frac{x^n}{n}\le\sum_{n\ge1}x^n=\frac{x}{1-x}\le 2x,
\]
so $\ln(1-x)\ge-2x$. Hence for $a\ge0$,
\[
(1-x)^a=\exp\big(a\ln(1-x)\big)\ge\exp(-2ax).
\]
Finally, using the standard inequality $e^{-y}\ge 1-y$ for all $y\in\mathbb R$ (the tangent-line bound for the convex exponential),
with $y=2ax\ge0$ we get $\exp(-2ax)\ge 1-2ax$. Combining completes the proof.
\end{proof}

\LemmaNumericSecond*
\begin{proof} \label{proof:second}
Define $\Phi(x) := \ln(1+2qx) + q\ln(1-x).$
The inequality is equivalent to $\Phi(x) \ge 0$, since
\begin{align*}
(1-x)^{-q} \le 1+2qx \iff -q\ln(1-x) \le \ln(1+2qx) \iff \Phi(x) \ge 0.
\end{align*}

Clearly, $\Phi(0)=0$.

Differentiating with respect to $x$, we have
\begin{align*}
\Phi'(x) = \frac{2q}{1+2qx} - \frac{q}{1-x}
= \frac{q \big(1 - 2x(1+q)\big)}{(1+2qx)(1-x)}.
\end{align*}
Since the denominator is positive on $\big[0, \tfrac{1}{2q}\big]$, the sign of $\Phi'(x)$ is determined by $1-2x(1+q)$, we have
\begin{align*}
\begin{cases}
\Phi'(x) \ge 0 & \text{for } x \le \tfrac{1}{2(1+q)}, \\[6pt]
\Phi'(x) \le 0 & \text{for } x \ge \tfrac{1}{2(1+q)}.
\end{cases}
\end{align*}
Thus $\Phi$ increases on $\left[0, \frac{1}{2(1+q)}\right]$ and decreases on $\left[\frac{1}{2(1+q)}, \frac{1}{2q}\right]$.  
Hence the minimum of $\Phi$ on $\left[0, \frac{1}{2q}\right]$ is attained at an endpoint, so
\begin{align*}
\Phi(x) \ge \min\{\Phi(0), \Phi(1/(2q))\} = \min\{0, \Phi(1/(2q))\}.
\end{align*}

Now evaluating at $x = \tfrac{1}{2q}$ we get
\begin{align*}
\Phi\!\left(\frac{1}{2q}\right) = \ln 2 + q \ln\!\left(1 - \frac{1}{2q}\right).
\end{align*}
So it suffices to show
\begin{align*}
q \ln\!\left(1 - \frac{1}{2q}\right) \ge -\ln 2,
\quad \text{i.e.}\quad
\left(1 - \frac{1}{2q}\right)^q \ge \frac{1}{2}.
\end{align*}

Define $\psi(q) := q \ln\!\left(1 - \frac{1}{2q}\right)$. Then
\[
\psi'(q) = \ln\!\left(1 - \frac{1}{2q}\right) + \frac{1}{2q\left(1-\frac{1}{2q}\right)}.
\]
Using the bound $\ln(1-t) \ge -\frac{t}{1-t}$ for $0 \le t < 1$ with $t=\tfrac{1}{2q}$, we obtain $\psi'(q) \ge 0$.  
Thus $\psi$ is increasing for $q \ge 1$. At $q=1$,
\begin{align*}
\psi(1) = \ln\!\left(\frac{1}{2}\right) = -\ln 2.
\end{align*}
So $\psi(q) \ge -\ln 2$ for all $q \ge 1$. Therefore,
\begin{align*}
\Phi\!\left(\frac{1}{2q}\right) = \ln 2 + \psi(q) \ge 0.
\end{align*}

Combining the above, we conclude $\Phi(x) \ge 0$ for all $x \in [0, \frac{1}{2q}]$.  
Exponentiation yields
\begin{align*}
(1-x)^{-q} \le 1+2qx.
\end{align*}
Hence, the claim is proved.

\end{proof}

\LemmaNumericThird*
\begin{proof}\label{proof:third}
Fix \(x\in\left[0,\tfrac12\right]\) and consider the function
\[
f(q):=(1-x)^{-q},\qquad q\in[0,1].
\]
A straightforward differentiation gives
\[
f'(q)=-\ln(1-x)\,(1-x)^{-q},\qquad
f''(q)=(\ln(1-x))^2(1-x)^{-q}\ge 0,
\]
so \(f\) is convex on \([0,1]\). By the convexity (the chord inequality) we have, for every \(q\in[0,1]\),
\[
f(q)\le (1-q)f(0)+q f(1) = (1-q)\cdot 1 + q\cdot(1-x)^{-1}
= 1 + q\big((1-x)^{-1}-1\big).
\]
Since \((1-x)^{-1}-1=\dfrac{x}{1-x}\) and \(1/(1-x)\le 2\) for \(x\in[0,\tfrac12]\), we obtain
\[
(1-x)^{-q} \le 1 + q\frac{x}{1-x} \le 1 + 2qx,
\]
which is the desired inequality.
\end{proof}

\end{document}